\documentclass{article}

\usepackage{arxiv}

\usepackage[utf8]{inputenc} % allow utf-8 input
\usepackage[T1]{fontenc}    % use 8-bit T1 fonts
\usepackage{hyperref}       % hyperlinks
\usepackage{url}            % simple URL typesetting
\usepackage{booktabs}       % professional-quality tables
\usepackage{amsfonts}       % blackboard math symbols
\usepackage{nicefrac}       % compact symbols for 1/2, etc.
\usepackage{microtype}      % microtypography
\usepackage{lipsum}		% Can be removed after putting your text content
\usepackage{graphicx}
\usepackage{natbib}
\usepackage{doi}

\usepackage{amsmath}
\usepackage[ruled]{algorithm2e}
\usepackage{mathtools}
\usepackage{booktabs,makecell,multirow}
\usepackage{algorithmic}

\newtheorem{theorem}{Theorem}
\newtheorem{definition}{Definition}

\newtheorem{proof}{Proof}

\title{Blood Glucose Control Via Pre-trained Counterfactual Invertible Neural Networks}

%\date{September 9, 1985}	% Here you can change the date presented in the paper title
%\date{} 					% Or removing it

\author{ \href{https://orcid.org/0000-0003-2167-4082}{\includegraphics[scale=0.06]{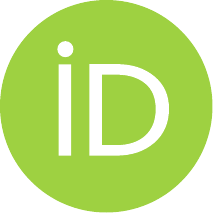}\hspace{1mm}Jingchi Jiang}\\
The Artificial Intelligence Institute\\ Harbin Institute of Technology\\ Harbin, Heilongjiang, China\\
	\texttt{jiangjingchi@hit.edu.cn} \\
	%% examples of more authors
	\And
	\href{https://orcid.org/0000-0002-6100-8888}{\includegraphics[scale=0.06]{orcid.pdf}\hspace{1mm}Rujia Shen} \\
	Faculty of Computing\\ Harbin Institute of Technology\\ Harbin, Heilongjiang, China\\
	\texttt{shenrujia@stu.hit.edu.cn} \\
	\And
	Boran Wang \\
	Faculty of Computing\\ Harbin Institute of Technology\\ Shenzhen, Guangdong, China\\
	\texttt{wangboran@hit.edu.cn} \\
	\And
	% Haosen Wang \\
	% School of Computer Science and Information Technology\\ Beijing Jiaotong University\\ Beijing, China\\
	% \texttt{21722107@bjtu.edu.cn} \\
	% \And
	% Xuelian Fu \\
	% Second Affliated Hospital of Harbin Medical University\\ Harbin, China\\
	% \texttt{100187@hrbmu.edu.cn} \\
	% \And
	Yi Guan \\
	Faculty of Computing\\ Harbin Institute of Technology\\ Harbin, Heilongjiang, China
	\texttt{guanyi@hit.edu.cn}
}

% Uncomment to remove the date
%\date{}

% Uncomment to override  the `A preprint' in the header
%\renewcommand{\headeright}{Technical Report}
%\renewcommand{\undertitle}{Technical Report}

%%% Add PDF metadata to help others organize their library
%%% Once the PDF is generated, you can check the metadata with
%%% $ pdfinfo template.pdf
\hypersetup{
pdftitle={A template for the arxiv style},
pdfsubject={q-bio.NC, q-bio.QM},
pdfauthor={David S.~Hippocampus, Elias D.~Striatum},
pdfkeywords={First keyword, Second keyword, More},
}

\begin{document}
\maketitle

\begin{abstract}
	Type 1 diabetes mellitus (T1D) is characterized by insulin deficiency and blood glucose (BG) control issues. The state-of-the-art solution for continuous BG control is reinforcement learning (RL), where an agent can dynamically adjust exogenous insulin doses in time to maintain BG levels within the target range. However, due to the lack of action guidance, the agent often needs to learn from randomized trials to understand misleading correlations between exogenous insulin doses and BG levels, which can lead to instability and unsafety. To address these challenges, we propose an introspective RL based on Counterfactual Invertible Neural Networks (CINN). We use the pre-trained CINN as a frozen introspective block of the RL agent, which integrates forward prediction and counterfactual inference to guide the policy updates, promoting more stable and safer BG control. Constructed based on interpretable causal order, CINN employs bidirectional encoders with affine coupling layers to ensure invertibility while using orthogonal weight normalization to enhance the trainability, thereby ensuring the bidirectional differentiability of network parameters. We experimentally validate the accuracy and generalization ability of the pre-trained CINN in BG prediction and counterfactual inference for action. Furthermore, our experimental results highlight the effectiveness of pre-trained CINN in guiding RL policy updates for more accurate and safer BG control.
\end{abstract}

% keywords can be removed
\keywords{Blood glucose control \and Invertible neural networks \and Forward prediction \and Counterfactual inference \and Reinforcement learning \and Pre-trained neural networks}

\section{Introduction}\label{sec:introduction}
Type 1 diabetes (T1D) stands as a chronic autoimmune disease stemming from the progressive destruction of beta cells in the pancreas, which impairs the body's ability to produce endogenous insulin \citep{cappon2023individualized}. Individuals afflicted with T1D necessitate exogenous insulin administration to regulate blood glucose (BG) levels within the specified target range of 70–180 mg/dl, thereby avoiding the risk of hypoglycemia ($<$70 mg/dl) or hyperglycemia ($>$180 mg/dl) \citep{li2019glunet}. If the BG concentration persistently and frequently exceeds the hyperglycemic threshold, patients may experience a range of severe cardiovascular complications, in addition to nephropathy and neuropathy \citep{nathan2014diabetes}. Consequently, the administration of exogenous insulin several times a day becomes imperative to mitigate elevated BG levels. Nevertheless, the excessive dosing of exogenous insulin can precipitate hypoglycemia, characterized by BG levels falling below 70 mg/dL, a dangerous condition even in the short term due to its potential to induce fainting, dizziness, coma, and ultimately, fatality \citep{cryer2003hypoglycemia}.

Most questions that arise in both daily life and scientific research are fundamentally predictive or introspective in nature. For example, predictive work may seek to answer questions such as, ``If patients with T1D adjust their exogenous insulin doses under normal carbohydrate intake, how will their BG levels change?'' \citep{li2019glunet,lee2023glucose}. Conversely, introspective research may delve deeper, posing hypothetical questions such as, ``What amount of exogenous insulin dose is required for patients with T1D to maintain BG levels within the target range?'' These questions are challenging to address through statistical analysis alone and require a causality-based method \citep{sun2023causality,liu2024cignn} that examines the relationships between external interventions (such as exogenous insulin dose and carbohydrate intake) and a hypothetical counterfactual scenario. Causality, as the underlying representation, typically remains invariant despite fluctuations in patient conditions \citep{lu2021invariant,wang2023causal}. However, despite the success of deep learning in forward prediction \citep{farnoosh2021deep,cao2023inparformer,zeng2023transformers}, most models rely on training and testing datasets that are independent and identically distributed (IID). This limitation leads to considerable prediction instability when faced with out-of-distribution (OOD) generalization \citep{d2022underspecification}. Furthermore, the inability of current correlation-driven approaches to distinguish between causal relationships and misleading correlations underscores their limited adaptability \citep{liu2023neuron}. Reinforcement learning (RL) agents based on misleading correlations often fail to control BG levels for patients with T1D and show instability and unsafety. Surprisingly, causal relationships can exhibit invariance across analogous yet distinct datasets \citep{edmonds2020theory}, suggesting that integrating causal relationships between features may offer solutions to address these challenges \citep{wang2022causal}.

Counterfactual inference \citep{pearl2010causal,kuang2020causal}, akin to human introspection, constitutes a pivotal component of the ladder of causation. The forward prediction allows us to anticipate the consequences of specific actions, while introspection empowers us to contemplate the way to specific outcomes hypothetically. Although introspection cannot alter existing factual circumstances, it can inform future behavior by leveraging prior causal relationships derived from historical interactions. As previously noted, forward prediction and counterfactual inference are distinct cognitive processes, but both rely on a deep understanding of causal relationships \citep{hofler2005causal,morgan2015counterfactuals}. 

In this paper, we aim to integrate forward prediction and counterfactual reasoning while leveraging causal information to address the challenges of generalization in BG level prediction. Once the model can effectively generalize, we believe it can fully perceive the underlying causality. Moreover, we should ultimately ensure the safety and accuracy of BG control. Our contributions are three-fold:

\begin{itemize}
    \item We propose Counterfactual Invertible Neural Networks (CINN), a bidirectional neural network that integrates forward prediction and counterfactual inference. CINN is constructed based on the topological order of causal relationships, enabling interpretable and robust solutions for generalizing OOD scenarios in BG level prediction.
    \item We integrate the pre-trained CINN into RL and utilize CINN as an introspective block within the agent to guide policy updates. This innovative approach involves engaging in counterfactual inference prior to the execution of each agent action, thereby reducing futile trials and errors during the training procedure, leading to more stable and safer BG control.
    \item Experimental results demonstrate the superior generalization ability of CINN in both IID and OOD scenarios for BG prediction and counterfactual inference. Furthermore, by incorporating the capability of counterfactual inference, RL with pre-trained CINN achieves state-of-the-art performance in terms of control effectiveness, consistently maintaining BG levels within the target range.
\end{itemize}

\section{Preliminaries}

In Section \ref{Causal_Inference}, we first discuss the relationship between forward prediction and counterfactual inference, which forms the probabilistic logical foundation of CINN. Subsequently, in Section \ref{INN} and Section \ref{OWN}, we introduce the concepts of invertible neural networks and orthogonal weight normalization, respectively. These concepts guide the design of the network architecture and the training process of CINN.

\subsection{Forward Prediction and Counterfactual Inference\label{Causal_Inference}}

The causal relationships between the physiological state of patients with T1D and interventions can be represented as a directed acyclic graph (DAG) \citep{liu2019learning,lipsky2022causal} denoted as $\mathcal{G}=(\mathcal{X}, \mathcal{E})$, with three types of temporal nodes, where $\mathcal{X}=\{S^t \in\mathbb{R}^n, S^{t+1} \in\mathbb{R}^n, A^t \in\mathbb{R}^2\}$. Here, $A^t\in\mathbb{R}^2$ represents the exogenous insulin dose and the carbohydrate intake. The joint distribution $p(S^t, S^{t+1}, A^t)$ is factorized as the product of conditional distributions given the parents $\text{Pa}(\cdot)$ of each node, i.e., $p(\mathcal{X}) = p(S^t|\text{Pa}(S^t))p(A^t|\text{Pa}(A^t))p(S^{t+1}|\text{Pa}(S^{t+1}))$, where $\text{Pa}(S^t)=\empty, \text{Pa}(S^{t+1})=\{S^t,A^t\},\text{Pa}(A^t)=\{S^t\}$. Each edge in $\mathcal{E}$ signifies a causal relationship: a path from $S^t$ to $A^t$ implies that $S^t$ is a potential cause of $A^t$. The causal effect of $S^t$ on $A^t$ is captured by the conditional distribution $p(A^t | S^t)$.

\begin{figure}[t]
    \centering
    \includegraphics[width=0.7\columnwidth]{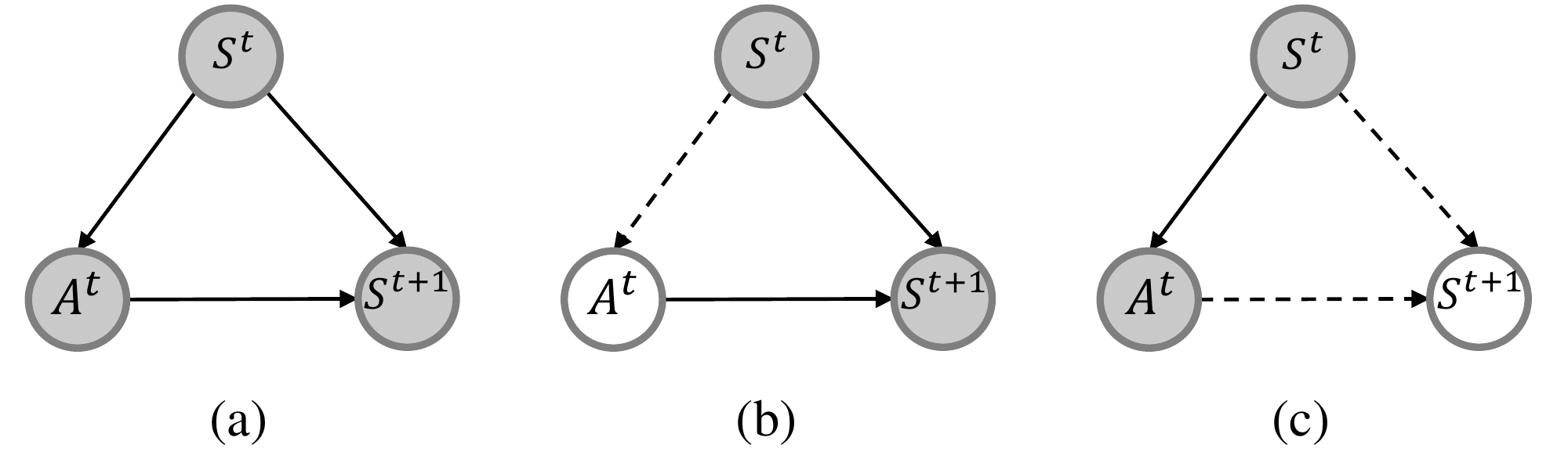} % Reduce the figure size so that it is slightly narrower than the column. Don't use precise values for figure width.This setup will avoid overfull boxes.
    \caption{Causal graphs illustrating the causal relationships between the states $S^t$ and $S^{t+1}$ of patients with T1D at time $t$ and $t+1$, and the interventions $A^t$ at time $t$, are delineated as follows: (a) the causal graph $\mathcal{G}$ without external intervention; (b) the causal graph $\mathcal{G}_{do(A^t=a^t)}$ with external intervention on $A^t$; (c) the counterfactual graph $\tilde{\mathcal{G}}_{do(S^{t+1}=\tilde{s}^{t+1})}$ in counterfactual inference with $S^{t+1}=\tilde{s}^{t+1}$ as the condition.}
    \label{fig1}
\end{figure}

An illustrative example of the causal graph $\mathcal{G}$ is presented in Figure \ref{fig1} (a), where $S^t$ represents the physiological state of patients with T1D at time $t$, $A^t$ signifies the exogenous insulin dose and the carbohydrate intake at time $t$, and $S^{t+1}$ denotes the state at the subsequent time $t+1$. Since the state $S^t$ determines the action $A^t$, a causal relationship $S^t\rightarrow A^t$ is established. The causal effect from $A^t$ to $S^{t+1}$ is captured by the conditional distribution restricted to the path $A^t\rightarrow S^{t+1}$. In prevalent correlation-based forward prediction models \citep{farnoosh2021deep,cao2023inparformer,zeng2023transformers}, it is expected to predict the future physiological state $S^{t+1}$ based on the current state $S^{t}$ along with exogenous insulin dose and carbohydrate intake $A^t$, ignoring the confounding condition. This omission establishes another backdoor pathway between $A^t$ and $S^{t+1}$, potentially leading to a flawed assessment of the causal effect between $A^t$ and $S^{t+1}$ \citep{greenland2001confounding,mcnamee2003confounding}. Subsequently, we delineate the probability conditional distributions of forward prediction and counterfactual inference to elucidate their relationship.

\subsubsection{Forward Prediction} 

In the intervened graph $\mathcal{G}_{do(A^t=a^t)}$, as illustrated in Figure \ref{fig1} (b), the causal effect of $do(A^t=a^t)$ can be formalized through the conditional distribution $p(S^{t+1}|do(A^t=a^t))$. The do-calculus provides us with a methodology to compute $p(S^{t+1}|do(A^t=a^t))$ utilizing observations from the original causal graph $\mathcal{G}$ \citep{pearl1995causal,pearl2012calculus}. Since $A^t$ undergoes an external intervention, it ceases to be influenced by $S^t$ (the edge from $S^t$ to $A^t$ is eliminated). Consequently, the do-calculus enables us to apply probabilistic techniques for forward prediction based on the intervened graph $\mathcal{G}_{do(A^t=a^t)}$. In the following, we will introduce the pivotal concept of the backdoor criterion \citep{van2014constructing,maathuis2015generalized} in do-calculus and derive the backdoor adjustment \citep{pearl2000models}. This derivation will establish the equivalence $p(S^{t+1}|do(A^t=a^t))=\sum_{s^t}p(S^{t+1}|A^t=a^t,S^t=s^t)p(S^t=s^t)$, thereby validating our approach.

\begin{definition}[Backdoor Criterion]
    In DAG $\mathcal{G}$ comprising $A^t$, $S^t$, and $S^{t+1}$, if $S^t$ is not a descendant of $A^t$, and conditioning on $S^t$ blocks all backdoor paths connecting $A^t$ and $S^{t+1}$, $S^t$ satisfies the backdoor criterion on $(A^t, S^{t+1})$.
\end{definition}

\begin{theorem}[Backdoor Adjustment]
    If $S^t$ satisfies the backdoor criterion with respect to $(A^t, S^{t+1})$, then the causal relationship from $A^t$ to $S^{t+1}$ can be expressed as: $p(S^{t+1}|do(A^t=a^t))=\sum_{s^t} p(S^{t+1}|A^t=a^t,S^t=s^t)p(S^t=s^t)$.
\end{theorem}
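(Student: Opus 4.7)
The plan is to derive the adjustment formula in three moves: first, write down the post-intervention joint distribution using the truncated factorization on the mutilated graph $\mathcal{G}_{do(A^t=a^t)}$; second, marginalize out $S^t$; and third, invoke the backdoor criterion to identify the interventional conditional appearing in the middle with the observational conditional that the theorem is stated in terms of.

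Concretely, I would begin by observing that in the mutilated graph the edge $S^t \to A^t$ is severed while every other structural mechanism is untouched. Since $S^t$ has no parents, its marginal $p(s^t)$ is preserved under the intervention, and since the mechanism generating $S^{t+1}$ is not targeted, the conditional $p(s^{t+1}\mid A^t=a^t,s^t)$ read off from $\mathcal{G}_{do(A^t=a^t)}$ is the same structural conditional as in $\mathcal{G}$. The truncated factorization therefore yields
$$p(s^t,s^{t+1}\mid do(A^t=a^t)) \;=\; p(s^t)\, p(s^{t+1}\mid A^t=a^t, s^t).$$
Summing over $s^t$ then gives exactly the claimed identity
$$p(s^{t+1}\mid do(A^t=a^t)) \;=\; \sum_{s^t} p(s^{t+1}\mid A^t=a^t, S^t=s^t)\, p(S^t=s^t).$$

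The main obstacle is the step that equates the interventional conditional with the observational one: a priori the quantity $p(s^{t+1}\mid A^t=a^t, s^t)$ obtained from the truncated factorization lives in $\mathcal{G}_{do(A^t=a^t)}$, while the right-hand side of the theorem involves the purely observational conditional in $\mathcal{G}$. Replacing one by the other is legitimate precisely because $S^t$ satisfies the backdoor criterion on $(A^t, S^{t+1})$: conditioning on $S^t$ blocks every backdoor path from $A^t$ to $S^{t+1}$, and $S^t$ is not a descendant of $A^t$, so $A^t$ and $S^{t+1}$ are d-separated by $S^t$ in the graph $\mathcal{G}_{\underline{A^t}}$ obtained from $\mathcal{G}$ by deleting the outgoing edges of $A^t$. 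By Pearl's Rule 2 of do-calculus (the action/observation exchange rule), this d-separation allows us to rewrite $p(s^{t+1}\mid do(a^t), s^t)$ as $p(s^{t+1}\mid a^t, s^t)$, closing the argument. The delicate point to verify is that conditioning on $S^t$ does not open any previously blocked collider on a backdoor path; this has to be checked in general DAGs, though in the simple three-node structure of Figure \ref{fig1}(a) it is immediate because the only backdoor path from $A^t$ to $S^{t+1}$ is $A^t \leftarrow S^t \to S^{t+1}$, which is blocked by $S^t$.
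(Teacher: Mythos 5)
Your proof is correct and follows essentially the same route as the paper's: both arguments rest on the invariance of $p(S^t=s^t)$ and of $p(S^{t+1}\mid A^t=a^t,S^t=s^t)$ under the intervention (which your truncated-factorization step encodes), followed by marginalization over $s^t$. Your additional appeal to Rule 2 of do-calculus merely makes explicit the identification step that the paper asserts directly, so the two proofs are substantively the same.
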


\begin{proof}
    Let $p_{do}$ denote the probability distribution after the intervention. We aim to determine the distribution of $p(S^{t+1}|do(A^t = a^t))$, which is equivalent to $p_{do}(S^{t+1}|A^t=a^t)$. As shown in Figure \ref{fig1} (b), after the intervention on $A^t$, the marginal distribution of $S^t$ remains unchanged, i.e., $p_{do}(S^t=s^t)=p(S^t=s^t)$. Furthermore, we also know that after the intervention, the conditional probability of $S^{t+1}$ given $S^t$ and $A^t$ remains unchanged, i.e., $p_{do}(S^{t+1}|A^t=a^t, S^t=s^t)=p(S^{t+1}|A^t=a^t, S^t=s^t)$. Therefore, we can rewrite $p(S^{t+1}|do(A^t=a^t))$ as follows:
    
    \begin{equation}
        \begin{aligned}
        p(S^{t+1}|do(A^t=a^t))
        &=p_{do}(S^{t+1}|A^t=a^t)\\
        =&\sum_{s^t}p_{do}(S^{t+1}|A^t=a^t,S^t=s^t)p_{do}(S^t=s^t|A^t=a^t)\\
        =&\sum_{s^t}p_{do}(S^{t+1}|A^t=a^t,S^t=s^t)p_{do}(S^t=s^t)\\
        =&\sum_{s^t}p(S^{t+1}|A^t=a^t,S^t=s^t)p(S^t=s^t)
        \end{aligned}
        \label{equationapp_1}
    \end{equation}
    % We have now established that $p(S^{t+1}|do(A^t=a))=\sum_{s^t}p(S^{t+1}|A^t=a,S^t=s^t)p(S^t=s^t)$.
\end{proof}

\subsubsection{Counterfactual Inference} 

Forward prediction serves as a potent methodology for investigating predictive questions, such as assessing the potential impact of the exogenous insulin dose and carbohydrate intake on BG levels. However, it lacks the capability to address introspective inquiries concerning counterfactual questions \citep{pawlowski2020deep}, such as quantifying the exogenous insulin dose and the carbohydrate intake necessary to maintain patients' BG levels consistently within the target range. This requires counterfactual inference \citep{johansson2016learning,pawlowski2020deep}.

In a hypothetical counterfactual scenario, the quantitative description of the ideal state at the future time step, denoted as $\tilde{s}^{t+1}$, retroactively determines the exogenous insulin dose and the carbohydrate intake $A^t$ (as depicted by the causal relationship from $S^{t+1}$ to $A^t$ in Figure \ref{fig1} (c)). By intervening on the state $S^{t+1}$ to attain the ideal value $do(S^{t+1}=\tilde{s}^{t+1})$, the causal effect between $A^t$ and $S^{t+1}$ can be expressed as $p(A^t|do(S^{t+1}=\tilde{s}^{t+1}))$. In counterfactuals, leveraging the do-calculus allows us to derive $p(A^t|do(S^{t+1}=\tilde{s}^{t+1}))=\sum_{s^t}p(A^t|S^{t+1}=\tilde{s}^{t+1},S^t=s^t)p(S^t=s^t)=\sum_{s^t}p(S^{t+1}=\tilde{s}^{t+1}|A^t,S^t=s^t)p(S^t=s^t)p(A^t|S^t=s^t)$. Furthermore, the formal representation of counterfactual inference for the intervention of $S^{t+1}=\tilde{s}^{t+1}$ under the condition $A^t = a^t$ can be expressed as:

\begin{equation}
    \begin{aligned}
    p(A^{t}=a^{t}|do(S^{t+1}=\tilde{s}^{t+1}))
    =&\sum_{s^t}[p_{do}(A^{t}=a^{t}|S^{t+1}=\tilde{s}^{t+1},S^t=s^t)p(S^t=s^t)]\\
    =&\sum_{s^t}[\frac{p(A^{t}=a^{t},S^{t+1}=\tilde{s}^{t+1},S^t=s^t)}{p(S^{t+1}=\tilde{s}^{t+1},S^t=s^t)}p(S^t=s^t)]\\
    =&\sum_{s^t}[\frac{p(S^{t+1}=\tilde{s}^{t+1}|A^t=a^t,S^t=s^t)p(A^{t}=a^{t},S^t=s^t)}{p(S^{t+1}=\tilde{s}^{t+1},S^t=s^t)}p(S^t=s^t)]\\
    =&\sum_{s^t}[p(S^{t+1}=\tilde{s}^{t+1}|A^t=a^t,S^t=s^t)\frac{p(A^{t}=a^{t}|S^t=s^t)}{p(S^{t+1}=\tilde{s}^{t+1},S^t=s^t)}p(S^t=s^t)]\\
    \end{aligned}
    \label{equationapp_2}
\end{equation}

Due to the external intervention on $S^{t+1}$, we have $p(S^{t+1}=\tilde{s}^{t+1}|S^t=s^t)=p(S^{t+1}=\tilde{s}^{t+1})=1$. Additionally, $p(A^t=a^t|do(S^{t+1}=\tilde{s}^{t+1}))$ can be expressed as: $\sum_{s^t}p(S^{t+1}=\tilde{s}^{t+1}|A^t=a^t,S^t=s^t)p(A^t=a^t|S^t=s^t)p(S^t=s^t)$. Comparing forward prediction $p(S^{t+1}=s^{t+1}|do(A^t=a^t))$ and counterfactual inference $p(A^t=a^t|do(S^{t+1}=\tilde{s}^{t+1}))$, both essentially require the ability to infer the future state $S^{t+1}$ conditioned on the current action $A^t$ and the current state $S^t$, i.e., $p(S^{t+1}|A^t=a^t,S^t=s^t)$, as well as the prior marginal distribution $p(S^t = s^t)$. The difference lies in the fact that the counterfactual inference requires an additional inference capability, $p(A^t = a^t | S^t = s^t)$. Given the above similarities between forward prediction and counterfactual inference, this paper aims to investigate the integration of forward prediction and counterfactual inference within a unified framework, enabling it to encompass bidirectional capabilities.

\subsection{Invertible Neural Networks\label{INN}}

In continuous decision-making for BG control, there exists a Markovian property between $S^t$ and $S^{t+1}$. When seeking to predict $S^{t+1}$ from the known $S^t=s^t$, the prevailing theory in forward prediction typically provides a model $\hat{s}^{t+1}=F(s^t, a^t)$ to answer forward prediction questions. Conversely, when aiming to identify the optimal action to achieve the target state $\tilde{s}^{t+1}$ through introspective hypothetical inference, there arises a need to express the backward model $\hat{a}^{t}=I(s^t, \tilde{s}^{t+1})$ as a conditional probability $p(\hat{a}^{t}|s^t, \tilde{s}^{t+1})$ in order to explore the action $\hat{a}^t$ and answer counterfactual inference questions. However, the irreversibility of deep learning models poses a challenge in directly performing counterfactual inference within the same network architecture.

Surprisingly, in medical image synthesis \citep{bannister2022deep,wang2023variable}, researchers often utilize invertible neural networks (INN) to capture the transformation relationships between multi-modal medical images, which involve bidirectional inference problems. The basic structure in INN \citep{dinh2016density} contains a series of identical basic invertible blocks, with each block composed of two complementary affine coupling layers. Specifically, for any representation $u\in\mathbb{R}^d$ in the latent space, the operation in the basic invertible block involves partitioning the input into two parts, $u_1$ and $u_2$, which are subsequently transformed linearly or nonlinearly by functions $m_{1,2}$ and $n_{1,2}$. These functions can be arbitrary and may not be invertible. The final output, $\hat{v}_1$ and $\hat{v}_2$ is obtained by alternating the coupling process. Figure \ref{fig_app1} illustrates the forward and backward computation. Specifically, the forward computation formulas are:

\begin{equation}
    \begin{aligned}
    \hat{v}_1 &= u_1\odot\exp(m_2(u_2))+n_2(u_2)\\
    \hat{v}_2 &= u_2\odot\exp(m_1(\hat{v}_1)) + n_1(\hat{v}_1)\\
    \end{aligned}
    \label{equationapp_3}
\end{equation}

Given the output $\tilde{v}$, the backward computation formulas are:

\begin{equation}
    \begin{aligned}
    \hat{u}_2 &= (\tilde{v}_2 - n_1(\tilde{v}_1)) \odot \exp(-m_1(\tilde{v}_1))\\
    \hat{u}_1 &= (\tilde{v}_1 - n_2(\hat{u}_2)) \odot \exp(-m_2(\hat{u}_2))\\
    \end{aligned}
    \label{equationapp_4}
\end{equation}

where $\exp$ means bitwise e-exponential calculation and $\odot$ means the Hadamard product.During the bidirectional training of INN, we evaluate the deviation between the ground-truth $v$ and the predicted output $[\hat{v}_1, \hat{v}_2]$ in the forward prediction. This deviation is quantified as $\mathcal{L}_F(v,[\hat{v}_1, \hat{v}_2])$, where $\mathcal{L}_F$ can represent various supervised loss functions, such as mean squared error for regression tasks or cross-entropy loss for classification tasks. Conversely, for the backward process, we establish the loss function $\mathcal{L}_I$, which evaluates the discrepancy between the ground-truth input distribution $p(u)$ and the predicted inverse distribution $q(\hat{u})$. This distribution distinction can be achieved through Maximum Mean Discrepancy \citep{arbel2019maximum}.

INN undergo iterative forward and backward processes, accumulating bidirectional gradients before updating their parameters to minimize forward and backward errors during training and enhance efficiency. However, it is essential to acknowledge that INN face significant design constraints. One such constraint is the stringent requirement for matching dimensions between inputs and outputs, which may constrain their adaptability to diverse tasks. For instance, in BG control for patients with T1D, where the action space involves exogenous insulin dose and carbohydrate intake ($a^t\in\mathbb{R}^2$) and the states $s^t,s^{t+1}\in\mathbb{R}^n$ with $n>2$, simultaneous execution of $\hat{s}^{t+1}=F(s^t, a^t)$ and $\hat{a}^{t}=I(s^t, \tilde{s}^{t+1})$ becomes challenging.

\begin{figure}[t]
    \centering
    \includegraphics[width=0.6\columnwidth]{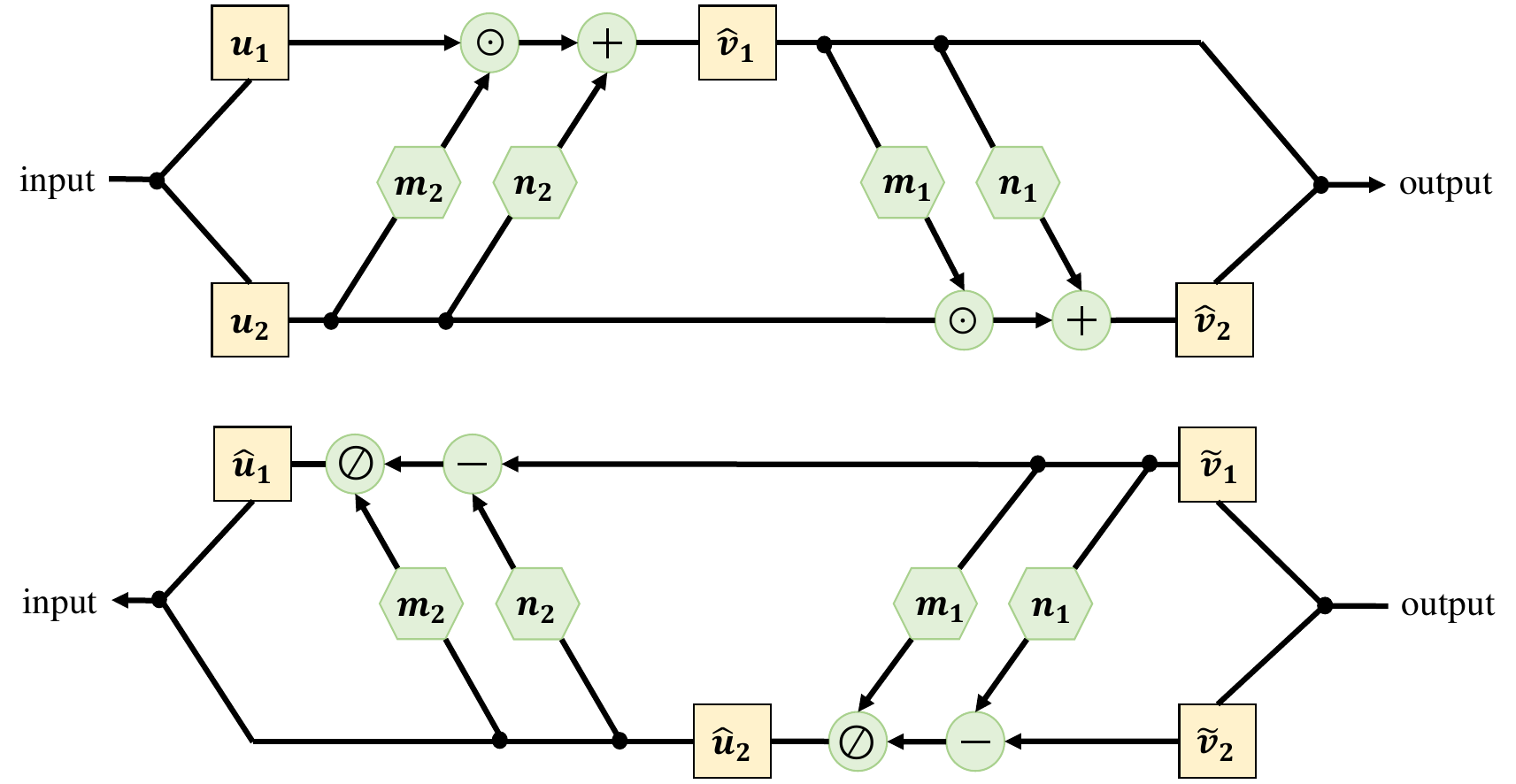} % Reduce the figure size so that it is slightly narrower than the column. Don't use precise values for figure width.This setup will avoid overfull boxes.
    \caption{Invertible Neural Networks}
    \label{fig_app1}
\end{figure}

\subsection{Orthogonal Weight Normalization\label{OWN}}

The conventional forward prediction neural networks can be viewed as a function $F(s^t,a^t;\theta)$ defined by parameters $\theta$, designed to fit the provided training datasets and effectively predict the future state $\hat{s}^{t+1}\in\mathbb{R}^n$. This function $F(s^t,a^t;\theta)$ can comprise various linear and nonlinear functions. For simplicity, we assume that it includes a transformation $\hat{s}^{t+1}=W\cdot [s^t,a^t]+B$ with the learnable weight $W\in\mathbb{R}^{n\times (n+2)}$ and the bias $B\in\mathbb{R}^n$. Thus, the learnable parameters are denoted as $\theta=\{W, B\}$. However, the backward process, expressed as $\hat{a}^t=W^{-1}(\tilde{s}^{t+1}-B) / s^t$, where $/$ denotes the truncation of $s^t$ from $W^{-1}(\tilde{s}^{t+1}-B)$, often poses challenges due to the  irreversibility of $W$.

To address the above issue and the problems of matching dimensions between inputs and outputs, an invertible technique called orthogonal weight normalization (OWN) \citep{huang2018orthogonal} has been introduced. It aims to train deep neural networks using orthogonal weight matrices $W\in\mathbb{R}^{n\times (n+2)}$ and formulates it as a constrained optimization problem:
\begin{equation}
	\begin{aligned}
	\theta^*&=\mathop{\arg\min}\limits_{\theta} \mathbb{E}_{s^t,s^{t+1},a^t}[\mathcal{L}_F(F(s^t,a^t;\theta),s^{t+1})]\ \rm{s.t.} W\in\mathcal{O}^{n\times (n+2)}
	\end{aligned}
    \label{equation3}
\end{equation}

where the matrix family $\mathcal{O}^{n\times (n+2)}=\{W\in\mathbb{R}^{n\times (n+2)}, WW^T=\mathbb{1}\}$ embodies orthogonality within weight matrices, enabling neural networks to maintain reversibility during backward process, as demonstrated by the operation $\hat{a}^t=W^{T}(\tilde{s}^{t+1}-B) / s^t$. $\mathbb{1}$ represents the unit matrix. Unlike INN, which rely on specific structural designs to achieve inverse operations, the OWN imposes reversible constraints on the weight parameters. This paper merges these two classes of invertible techniques and proposes the CINN capable of flexibly adapting to diverse BG level prediction and counterfactual inference for patients with T1D. To our knowledge, this study is the first to unify these two invertible methods into a bidirectional model and apply it to both forward prediction and counterfactual inference for stable and safer BG control.

\section{Counterfactual Invertible Neural Network \label{CINN}}

In this section, we introduce the Counterfactual Invertible Neural Network (CINN), which serves as a pre-trained and frozen introspective network to guide the RL updates discussed in the following section. The design of the CINN follows several basic principles. First, in order to improve interpretability, which is of paramount importance in the medical domain \citep{zhang2021interpretability,wang2023interpretable}, the CINN must have an architecture that humans can understand. Second, the base architecture should be simple yet adaptable to efficiently serve as a component for the RL agents. Finally, the CINN is designed to support not only forward prediction under interventions but also counterfactual inference under IID and OOD scenarios for a better understanding of causality. Note that in the real world, the causal graph is usually determined by dynamic equations. For convenience, we illustrate a causal graph with a state dimension of 6. However, in BG control, the physiological state of patients with T1D is represented by 13 dimensions \citep{man2014uva}, forming two symmetric blocks and one asymmetric block. In the following, we will explore how these principles converge within CINN.

\subsection{The Interpretable Construction of CINN\label{section32}}

Establishing the relationships between the causal graph $\mathcal{G}$ and the neural network structure is essential to improve the interpretability of the model and to mitigate the effects of misleading correlations \citep{wein2021graph,zevcevic2021relating}. As depicted in Figure \ref{fig2} (a), our CINN consists of two types of basic invertible blocks (symmetric and asymmetric). We believe that the interpretable combination of these basic blocks based on the causal graph is a prerequisite for constructing the causal effects among interventions, states, and future states.

\begin{figure*}[t]
    \begin{minipage}[t]{0.7\linewidth}
        \centering 
        \includegraphics[width=\textwidth]{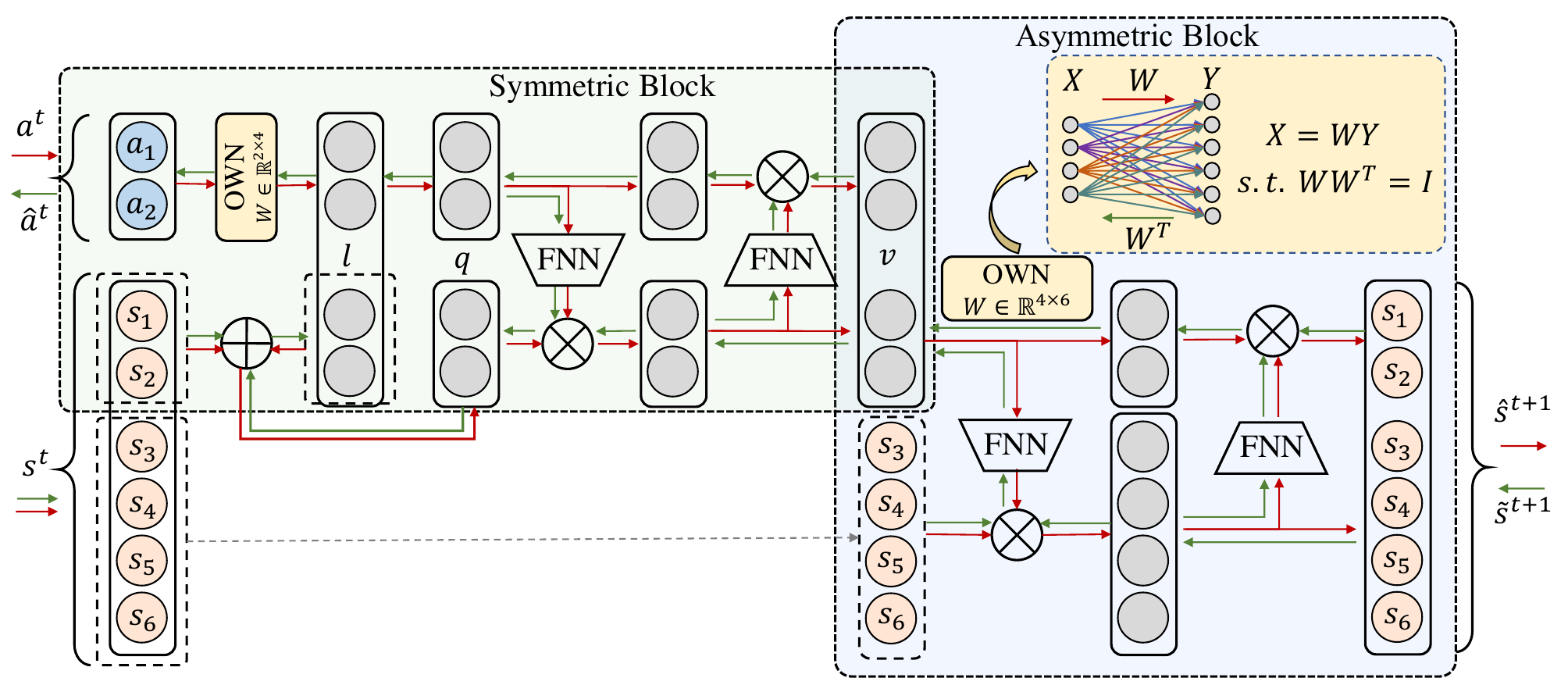}
        \centerline{(a) Architecture of CINN}
    \end{minipage}%
    \begin{minipage}[t]{0.3\linewidth}
        \centering
        \includegraphics[width=\textwidth]{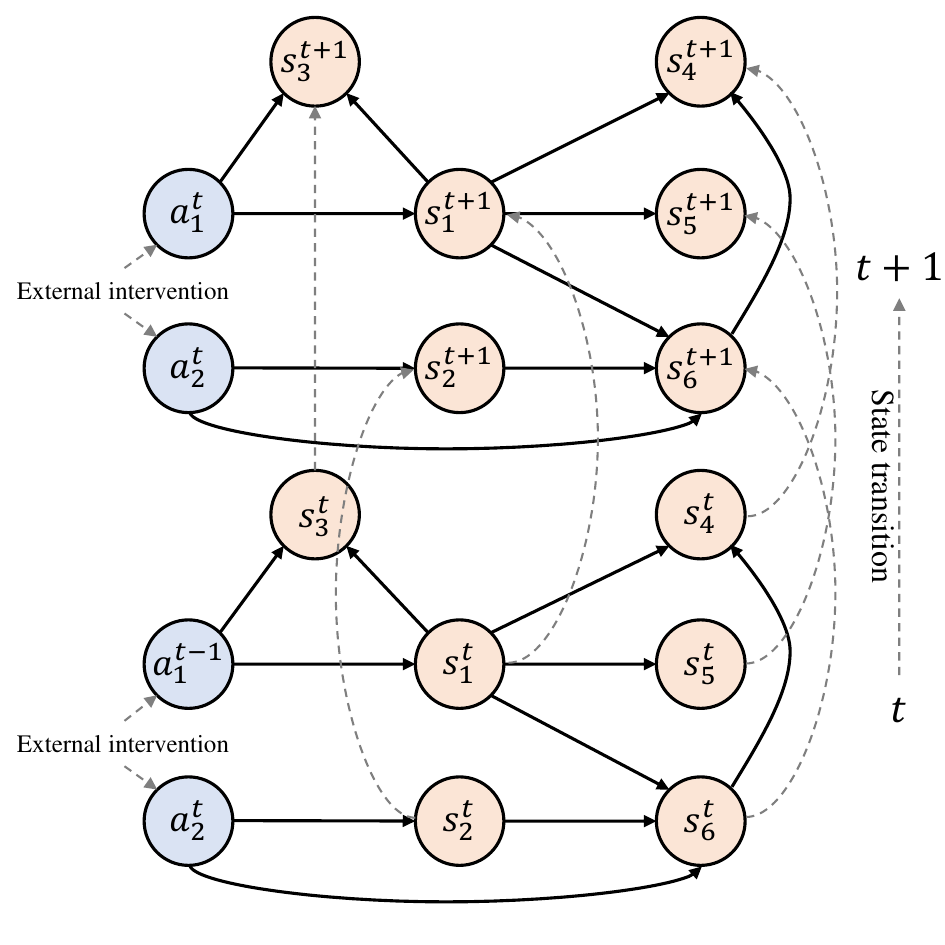}
        \centerline{(b) Causal Relationships}
    \end{minipage}
    \caption{(a) The architecture of Counterfactual Invertible Neural Networks (CINN) consists of two types of basic invertible blocks (symmetric and asymmetric). During forward prediction, the input consists of the intervention $a^t$ and the current state $s^t$, which are used to predict the future state $\hat{s}^{t+1}$. In counterfactual inference, given the current state $s^t$ and the desired future state $\tilde{s}^{t+1}$, CINN outputs the optimal action $\hat{a}^{t}$ necessary to achieve the target state.
(b) The diagram illustrates the causal relationships among the intervention $\{a^t_i\}_{i=1,2}$, the current state $\{s^t_i\}_{i=1,...,n}$ and the future state $\{s^{t+1}_i\}_{i=1,...,n}$ along with the temporal evolution of states under external intervention.}
    \label{fig2}
\end{figure*}

Given a causal graph $\mathcal{G}$, we propose a network structure generation algorithm based on the causal graph (detailed in Algorithm \ref{algorithm1}) to design the structure of the CINN. The core concept is to utilize the Floyd algorithm \citep{dibangoye2009topological} to determine the topological order of the directed acyclic causal graph. By organizing hierarchically according to this topological order, the predecessors and successors serve as inputs and outputs for the basic blocks. We use symmetric blocks when the dimensions of the predecessors and successors are the same. In cases where there is a mismatch, asymmetric blocks will handle the problem. For example, Figure \ref{fig2} (b) illustrates a causal graph with the external intervention $A^t\in\mathbb{R}^2$ and the states $S^t, S^{t+1}\in\mathbb{R}^6$, following the causal topological order: $a^t_1, a^t_2, s^{t+1}_1, s^{t+1}_2, s^{t+1}_3, s^{t+1}_6, s^{t+1}_5, s^{t+1}_4$. $s^{t+1}_1$ and $s^{t+1}_2$ are seperately influenced by $a^t_1$ and $a^t_2$. $s^{t+1}_3$ and $s^{t+1}_6$ are jointly affected by $a^t_1, a^t_2, s^t_1$, and $s^t_2$. Moreover, $s^{t+1}_5$ and $s^{t+1}_4$ are indirectly affected by $a^t_1$ and $a^t_2$. Therefore, the first symmetric block (corresponding to the first and second layers of the tree formed by the causal topological order.) represents the causal effects of $a^t_1$ and $a^t_2$ on $s^t_1$ and $s^t_2$, with input and output dimensions of $[a^t_1,a^t_2,s^t_1,s^t_2]\in\mathbb{R}^4$. For subsequent blocks, we consider reducing the number of blocks due to model complexity. Hence, we consider $s^t_3, s^t_4, s^t_5, s^t_6$ as additional inputs for the next layer, resulting in asymmetric blocks utilized to handle this situation. For real-world BG prediction for patients with T1D, where $S^t, S^{t+1}\in\mathbb{R}^{13}$, the dimension expansion results in the deepening of the tree corresponding to the causal topological graph, resulting in the use of the two symmetric blocks and one asymmetric block.

\subsection{Basic Blocks\label{section31}}

Since the specific block concatenation and construction of CINN rely primarily on the causal relationships that support its interpretability, we will introduce the internal implementation details of each type of basic block in the following.

As depicted in Figure \ref{fig2} (a), during the forward prediction process, CINN takes the state $s^t$ at time $t$ and the external intervention $a^t$ as inputs and outputs the predicted future state $\hat{s}^{t+1}$. This can be formalized as $\mathbb{E}[S^{t+1}=\hat{s}^{t+1}|do(A^t=a^t),S^t=s^t)$. In the counterfactual inference process, CINN takes the state $s^t$ at time $t$ and the desired state $\tilde{s}^{t+1}$ at the next time $t+1$ as inputs and outputs the optimal action $\hat{a}^{t}$ to achieve the desired state. This can be formalized as $\mathbb{E}[A^t=\hat{a}^{t}|do(S^{t+1}=\tilde{s}^{t+1}),S^t = s^t)$. As Equation \ref{equationapp_3} indicates, $m$ and $n$ can be any neural network structure. For lightweight models, in CINN, the design of these two types of basic blocks consists of fully connected feedforward neural networks (FNN) equipped with leaky ReLU activations \citep{xu2020reluplex}.

\subsubsection{Symmetric Blocks}

In symmetric blocks, where the inputs and outputs are of the same dimension, i.e., $u, v \in \mathbb{R}^d$, we first partition $u$ into $u_1, u_2$ based on the actual involvement of these $d$-dimensions in the computations. Next, we will  achieve the fusion of information between $u_1$ and $u_2$ with orthogonal weight matrices:

\begin{equation}
	\begin{aligned}
	[l_1,l_2]=u_1 W;\quad q_1 =l_1;\quad q_2=l_2+u_2
	\end{aligned}
    \label{equation3_1}
\end{equation}

\begin{algorithm}[!t]
    \caption{Structure Generation for CINN}
    \label{algorithm1}
    \textbf{Input}: Causal graph $\mathcal{G}$\\
    \textbf{Output}: The structure of CINN
    \begin{algorithmic}[1] %[1] enables line numbers
    \STATE // Calculation of Topological Order of Causal Graph Using Floyed Algorithm
    \STATE $\rm T=\rm{Floyed}(\mathcal{G}$)
    \STATE $\rm{predecessor}=0, \rm{successor}=0, \rm{blocks}=\rm{list}()$
    \STATE // Cyclically generating all basic blocks, including symmetric and asymmetric types.\
    \FOR {$\rm{i}$ \TO $\rm{LayerNum}(T)-1$} 
        \STATE // Calculating the predecessor and successor dimensions for each basic block.
        \STATE $\rm{predecessor}+= T[i]$
        \STATE $\rm{successor}=T[i+1]$
        \STATE // Generating symmetric or asymmetric blocks.
        \IF{predecessor==successor}
        \STATE Create a symmetric block and add it to blocks
        \ENDIF
        \IF{predecessor!=successor}
        \STATE Create a asymmetric block and add it to blocks
        \ENDIF
    \ENDFOR
    \STATE \textbf{return} blocks
    \end{algorithmic}
\end{algorithm}

Here, we use orthogonal weight matrices to achieve linear information fusion within $u$, where $WW^T = \mathbb{1}$. Subsequently, we implement nonlinear functional relations based on Equation \ref{equationapp_3}. Following the causal topological relations, we can stack the invertible blocks multiple times to simulate arbitrarily complex function transformations within CINN. Next, $q_1$ and $q_2$ undergo standard affine coupling layers, and after two types of transformations, the output $v$ is obtained to characterize the resulting hidden representation, where $m_1$, $m_2$, $n_1$, and $n_2$ are implemented as FNN with leaky ReLU:

\begin{equation}
    \begin{aligned}
    \hat{v}_1 &= q_1\odot\exp(m_2(q_2))+n_2(q_2)\\
    \hat{v}_2 &= q_2\odot\exp(m_1(\hat{v}_1)) + n_1(\hat{v}_1)\\
    \end{aligned}
    \label{equation3_2}
\end{equation}

The counterfactual inference process within the symmetric block processes the predefined output hidden representation $\tilde{v}$ along with the partially computable input hidden representation $\tilde{u}_2$ to yield the desired $\hat{u}_1$. In the forward prediction process, transitioning from $q$ to $v$, the reversible nature of neural networks allows us to derive $\hat{q}$ from $\tilde{v}$ without $\tilde{u}_2$.

\begin{equation}
    \begin{aligned}
    \hat{q}_2 &= (\tilde{v}_2 - n_1(\tilde{v}_1)) \odot \exp(-m_1(\tilde{v}_1))\\
    \hat{q}_1 &= (\tilde{v}_1 - n_2(\hat{q}_2)) \odot \exp(-m_2(\hat{q}_2))\\
    \end{aligned}
    \label{equation3_3}
\end{equation}

Subsequently, the process of inferring $\hat{u}$ from $\hat{q}$ in the counterfactuals requires the aid of the partially computable input hidden representation $\tilde{u}_2$. Specifically, for $\hat{l}_1=\hat{q}_1$ and $\hat{l}_2=\hat{q}_2-\tilde{u}_2$, according to $WW^T=I$, we have $\hat{u}_1=\hat{l}W^T$.

Let us consider the symmetric block in Figure \ref{fig2} (a) as an illustration. The input in this symmetric block is denoted as $u=[a^t_1,a^t_2,s^t_1,s^t_2]\in\mathbb{R}^4$. Naturally, we first decompose $u$ into $u_1=[a^t_1,a^t_2]$ and $u_2=[s^t_1,s^t_2]$, which physically correspond to the external intervention $a^t$ and the state $s^t$, respectively. To simulate that $a^t$ is a consequence of $s^t$, we apply an orthogonal weight matrix on $u_1$ and then add $u_2$ to it, resulting in $q$. Subsequently, standard affine coupling layers are used to subject $q$ to complicated nonlinear transformations, leading to $\hat{v}$, while ensuring the reversibility of the transformation. Hence, counterfactual inference also exists in the inverse transformation form, as described above.

In the symmetric blocks, forward prediction can be formally represented as $\mathbb{E}[\hat{v}|do(A^t = a^t), S^t = s^t]$ and counterfactual inference as $\mathbb{E}[A^t=\hat{a}^t|\tilde{v}, S^t = s^t]$. However, it is important to note that the symmetric blocks can only accommodate scenarios where the input dimension matches the output dimension. As a result, neural networks constructed using only symmetric blocks have certain limitations in practical scenarios.

\subsubsection{Asymmetric Blocks}

Specifically, for forward prediction, $|A^t|+|S^t|\neq|S^{t+1}|$, and for counterfactual inference, $|S^t|+|S^{t+1}|\neq|A^t|$. As shown in the asymmetric block in Figure \ref{fig2} (a), the output dimension of the symmetric block during forward prediction is $\hat{v}\in\mathbb{R}^4$. Additionally, based on the causal topological relations, $\hat{v}$ is concatenated with the remaining state $s^t_{3:6}$ to finally obtain the predicted state $\hat{s}^{t+1}$. However, $[\hat{v};s_3,...,s_6]\in\mathbb{R}^8$ while $\hat{s}^{t+1}\in\mathbb{R}^6$, which poses a dimensionality mismatch problem.

To address the dimensionality transformation problem, we have made certain modifications to the symmetric block and proposed an asymmetric type of the basic block that incorporates the ability to adjust the input dimension using orthogonal weight matrices. This asymmetric type of basic block allows CINN to effectively handle scenarios where the input dimension does not match the output dimension. Noting that in the forward prediction process, $a^t$ has already transmitted information to $s_1$ and $s_2$, we do not use orthogonal weight matrices for information fusion in the asymmetric block. Instead, we use its dimension transformation capability to transform the dimensions to match those of the final outputs. This way we can end up with $\hat{s}^{t+1}\in\mathbb{R}^6$. The counterfactual inference will also hold because of the invertibility of CINN. Combining these two types of blocks in series will effectively increase the usability and versatility of the invertible model.

\subsection{Training\label{section33}}

Overall, during the forward prediction, the external intervention $a^t$ and the state $s^t$ are inputted into CINN. The hidden representations of the causal relationships are used to predict the future state $s^{t+1}$. Conversely, using the same network and parameters, CINN is fed with the state $s^t$ and the desired future state $\tilde{s}^{t+1}$ to perform the counterfactual process and derive the external intervention $\hat{a}^t$. Consequently, the loss function comprises two components: the forward prediction loss $\mathcal{L}_F(s^{t+1},\hat{s}^{t+1})=\frac{1}{n}\sum_{i=1}^{n}(s^{t+1}_{(i)}-\hat{s}^{t+1}_{(i)})^2$ and the counterfactual inference loss $\mathcal{L}_I(a^{t},\hat{a}^{t})=\frac{1}{2}\sum_{i=1}^{2}(a^{t}_{(i)}-\hat{a}^{t}_{(i)})^2$. We define the loss function as the weighted sum of these bidirectional losses:

\begin{equation}
	\begin{aligned}
    \mathcal{L}& =\mathcal{L}_F+\mathcal{L}_I
	\end{aligned}
    \label{equation5}
\end{equation}
The loss is computed as the average over the shuffled mini-batch, and the derivatives of each parameter can be computed via alternating bidirectional back-propagation.

\begin{figure}[t]
    \centering
    \includegraphics[width=0.7\linewidth]{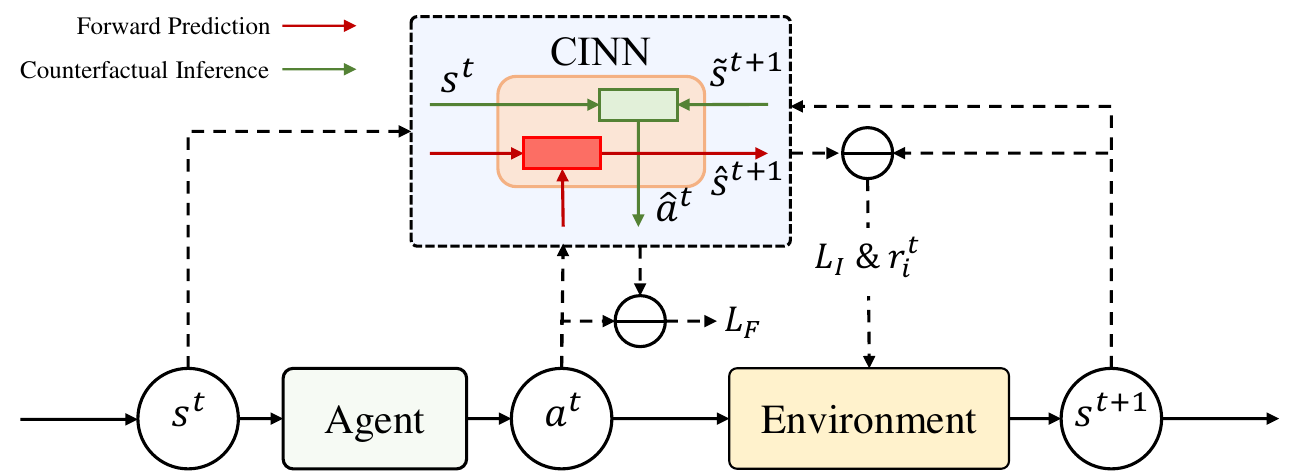} % Reduce the figure size so that it is slightly narrower than the column. Don't use precise values for figure width.This setup will avoid overfull boxes.
    \caption{RL with pre-trained frozen introspective CINN.}
    \label{fig3}
\end{figure}

\section{Introspective Reinforcement Learning}

RL algorithms strive to learn continuous decision-making policies that accomplish specific tasks by maximizing the cumulative rewards provided by the environment \citep{schulman2017proximal,haarnoja2018soft}. In real-life situations, we often encounter events that may occur only once or twice in a lifetime. Therefore, agents need the ability to introspect and imagine based on previous trajectories before taking action rather than randomly exploring. 
Since we have obtained a pre-trained CINN based on Section \ref{CINN}, in this Section, we consider the pre-trained CINN as a mechanism for introspection and aim to integrate it into RL in a frozen block to enhance the safety and stability of BG control.

\begin{figure}[t]
    \begin{minipage}[t]{0.5\linewidth}
        \centering 
        \includegraphics[width=\textwidth]{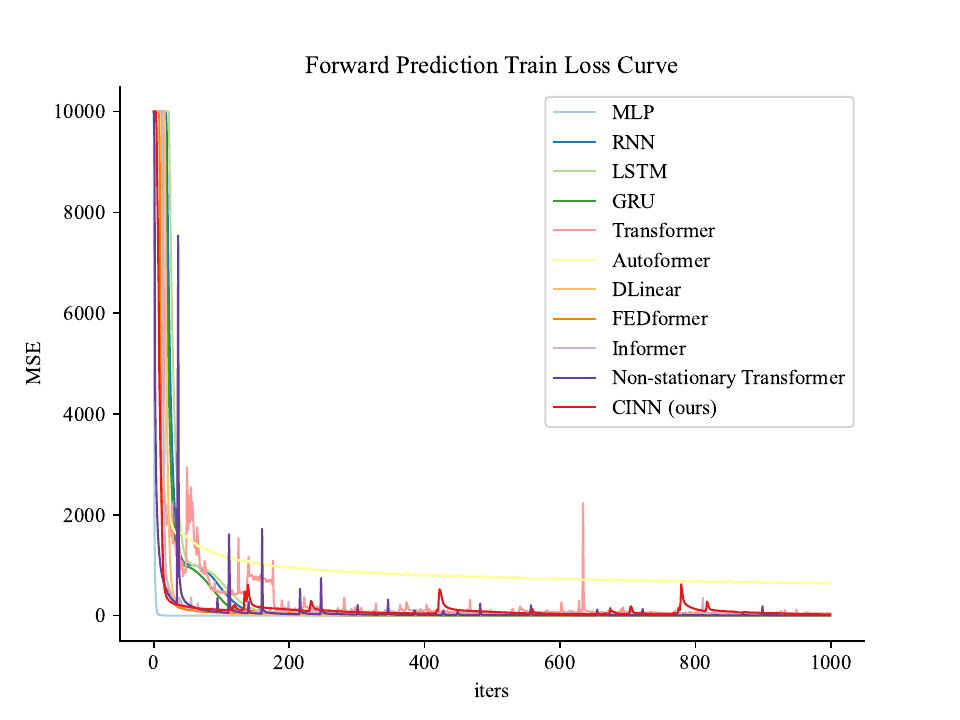}
        % \centerline{(a) Forward Prediction Train Loss}
    \end{minipage}%
    \begin{minipage}[t]{0.5\linewidth}
        \centering
        \includegraphics[width=\textwidth]{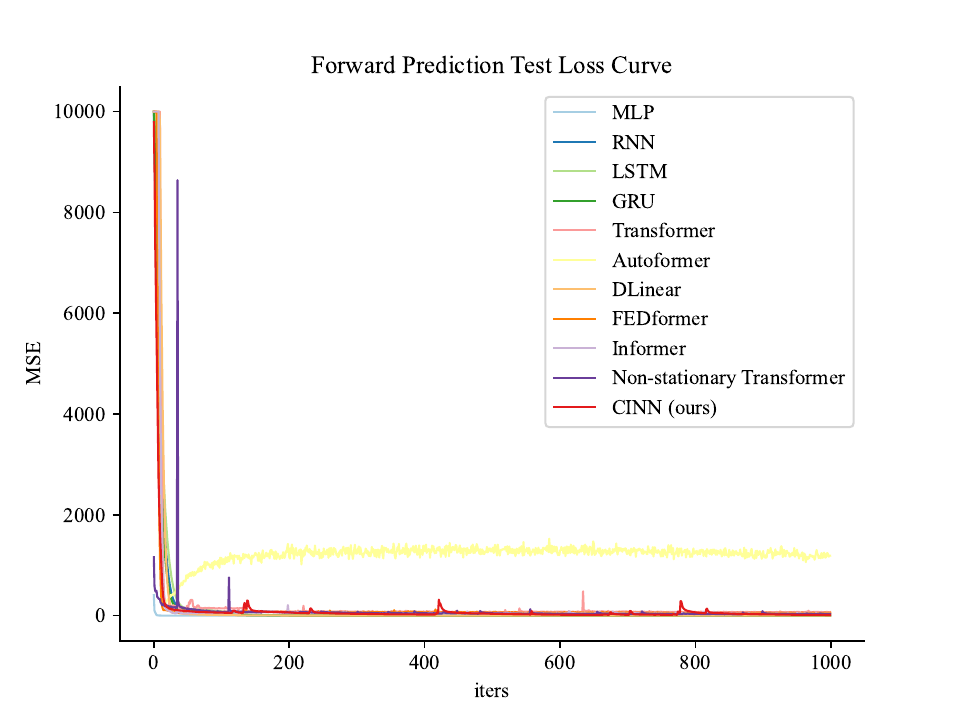}
        % \centerline{(b) Forward Prediction Test Loss}
    \end{minipage}

    \begin{minipage}[t]{0.5\linewidth}
        \centering 
        \includegraphics[width=\textwidth]{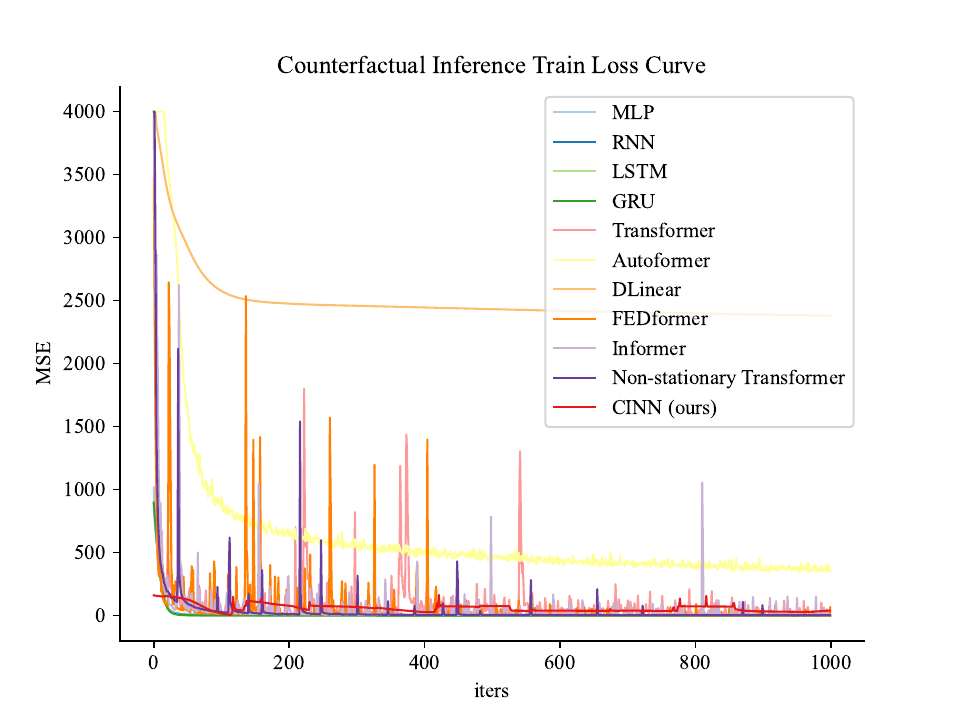}
        % \centerline{(c) Counterfactual Inference Train Loss}
    \end{minipage}%
    \begin{minipage}[t]{0.5\linewidth}
        \centering
        \includegraphics[width=\textwidth]{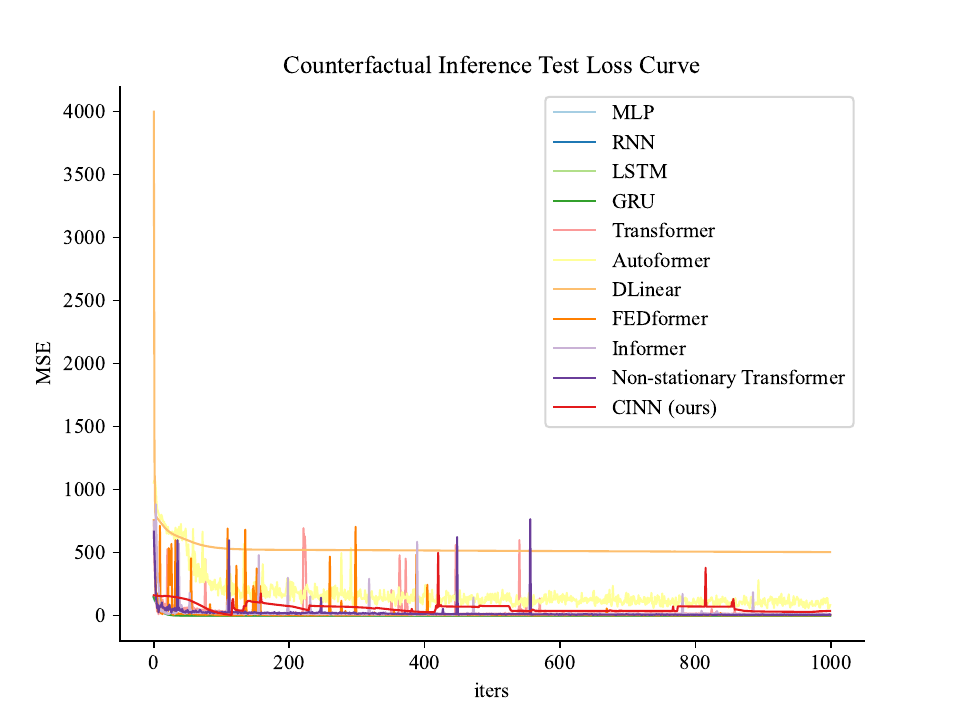}
        % \centerline{(d) Counterfactual Inference Test Loss}
    \end{minipage}
    \caption{Forward prediction and counterfactual inference under IID scenarios.}
    \label{fig4}
\end{figure}

\begin{figure*}[t]
    \begin{minipage}[t]{0.24\linewidth}
        \centering 
        \includegraphics[width=\textwidth]{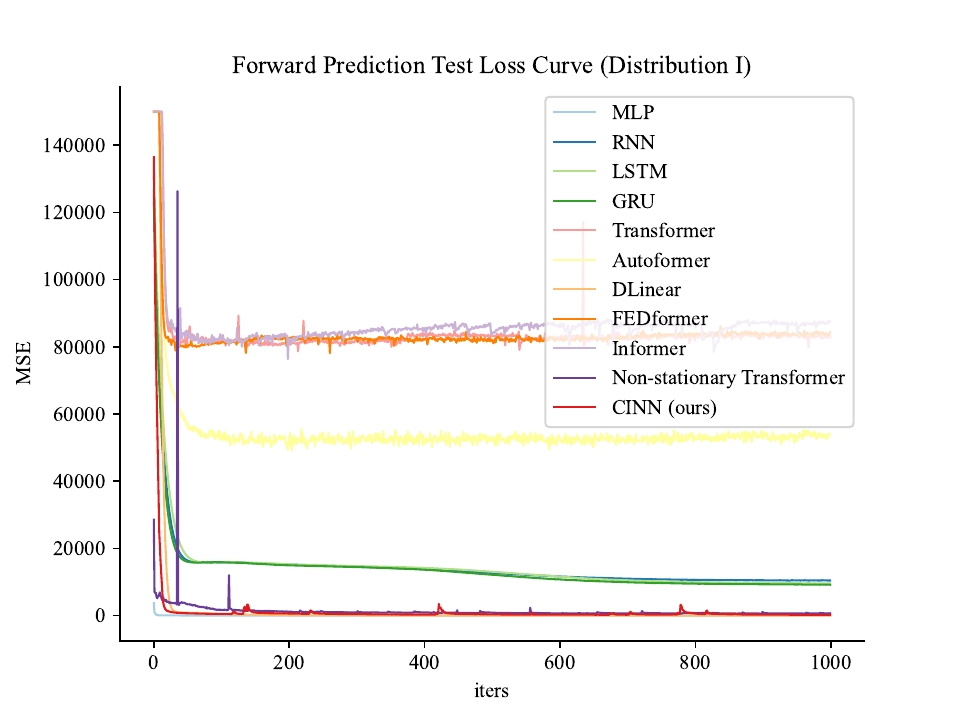}
        % \centerline{Distribution I}
    \end{minipage}%
    \begin{minipage}[t]{0.24\linewidth}
        \centering
        \includegraphics[width=\textwidth]{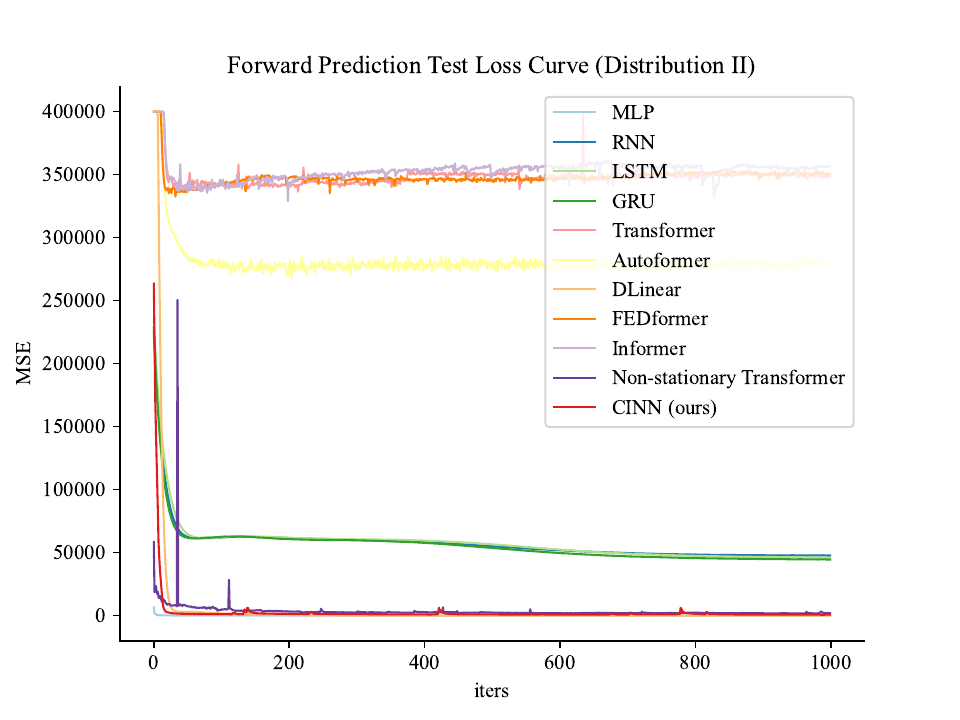}
        % \centerline{Distribution II}
    \end{minipage}
    \begin{minipage}[t]{0.24\linewidth}
        \centering 
        \includegraphics[width=\textwidth]{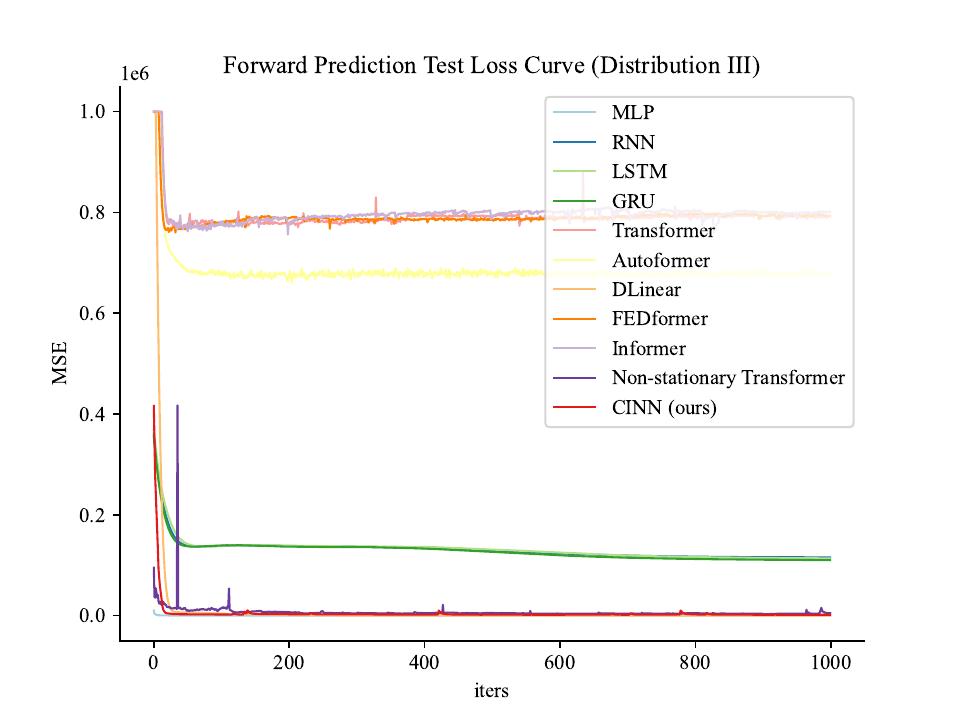}
        % \centerline{Distribution III}
    \end{minipage}%
    \begin{minipage}[t]{0.24\linewidth}
        \centering
        \includegraphics[width=\textwidth]{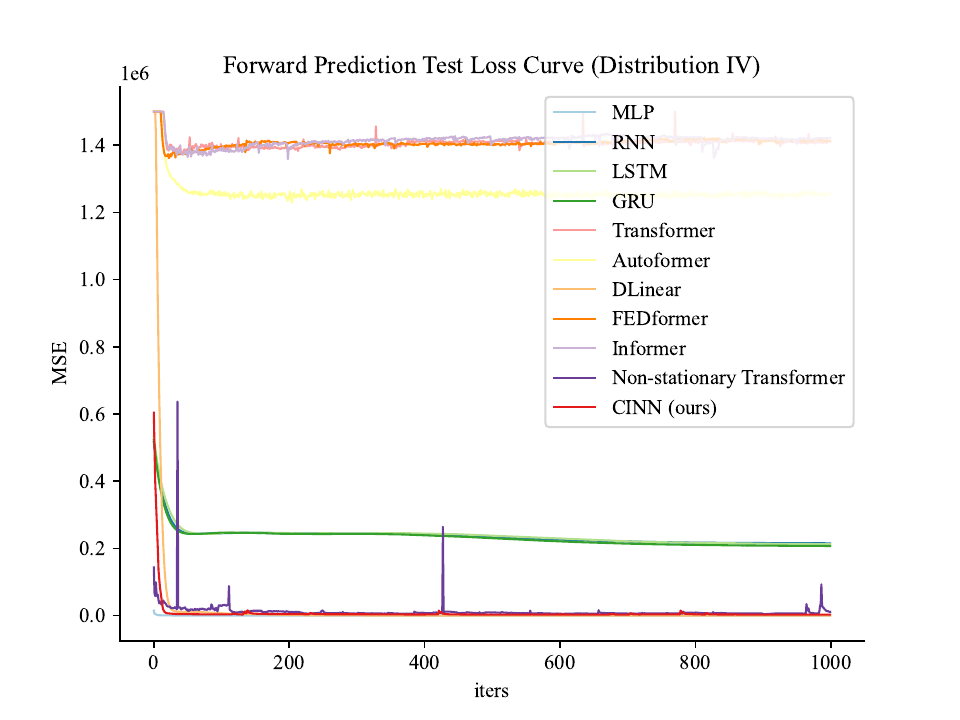}
        % \centerline{Distribution IV}
    \end{minipage}

    \begin{minipage}[t]{0.24\linewidth}
        \centering 
        \includegraphics[width=\textwidth]{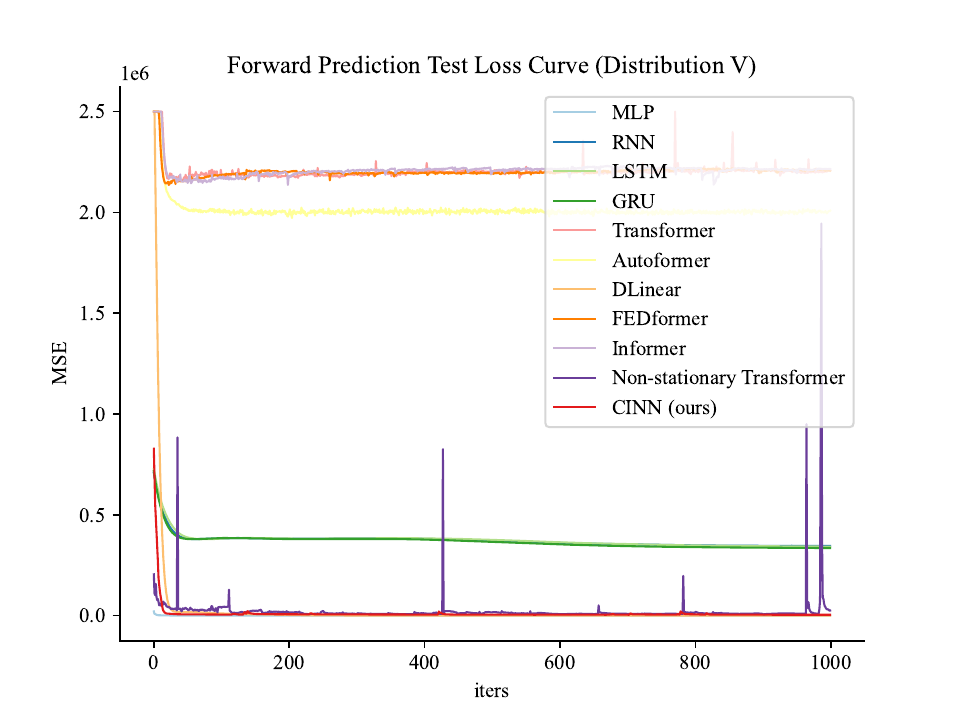}
        % \centerline{Distribution V}
    \end{minipage}%
    \begin{minipage}[t]{0.24\linewidth}
        \centering
        \includegraphics[width=\textwidth]{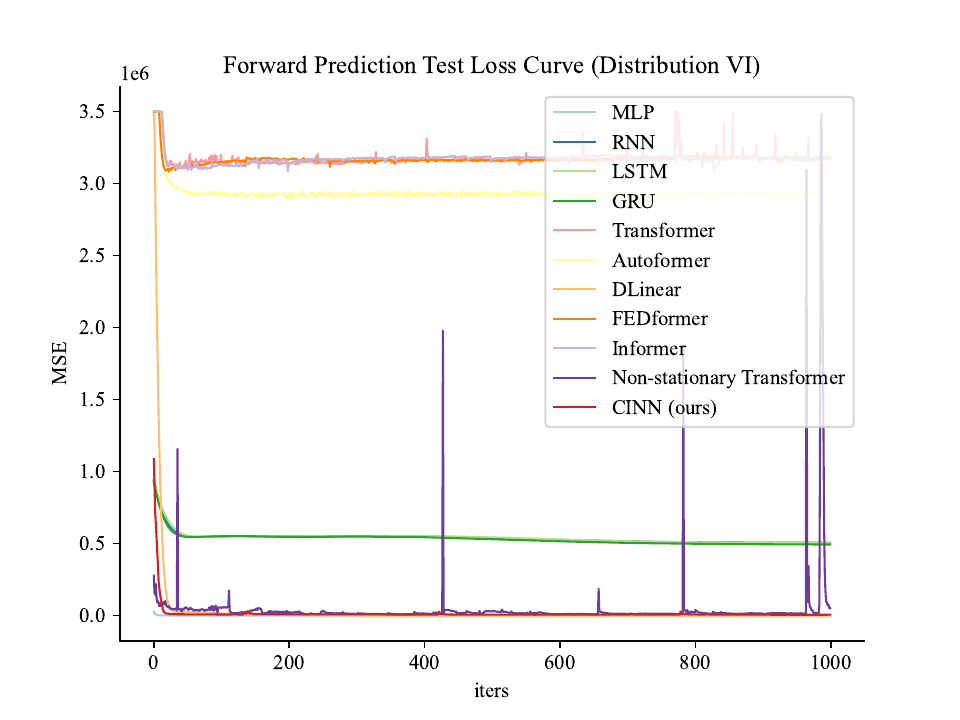}
        % \centerline{Distribution VI}
    \end{minipage}
    \begin{minipage}[t]{0.24\linewidth}
        \centering 
        \includegraphics[width=\textwidth]{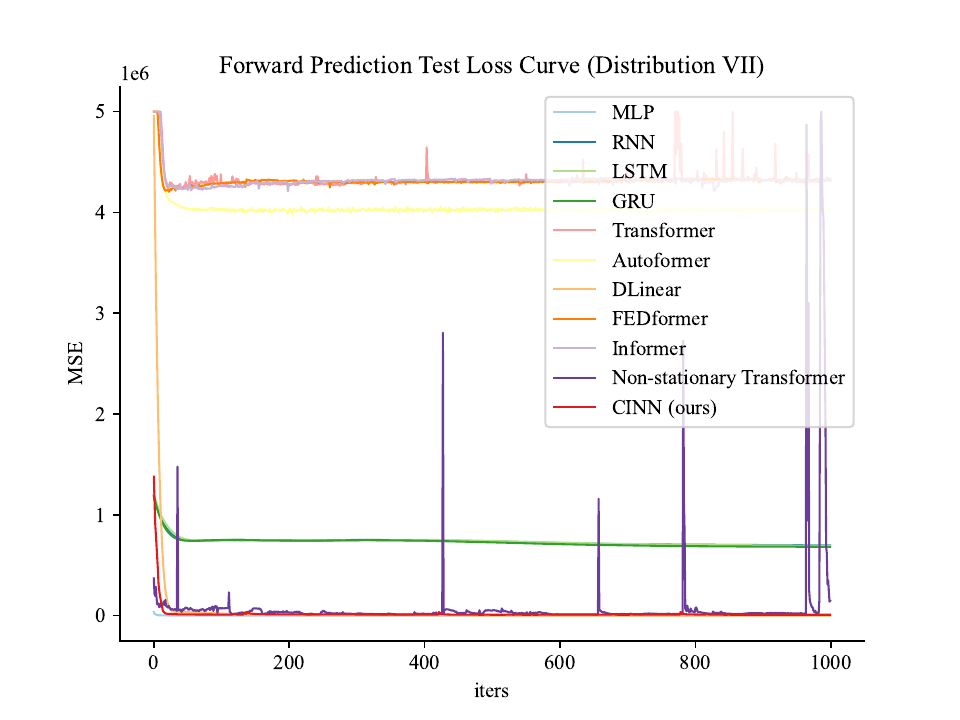}
        % \centerline{Distribution VII}
    \end{minipage}%
    \begin{minipage}[t]{0.24\linewidth}
        \centering
        \includegraphics[width=\textwidth]{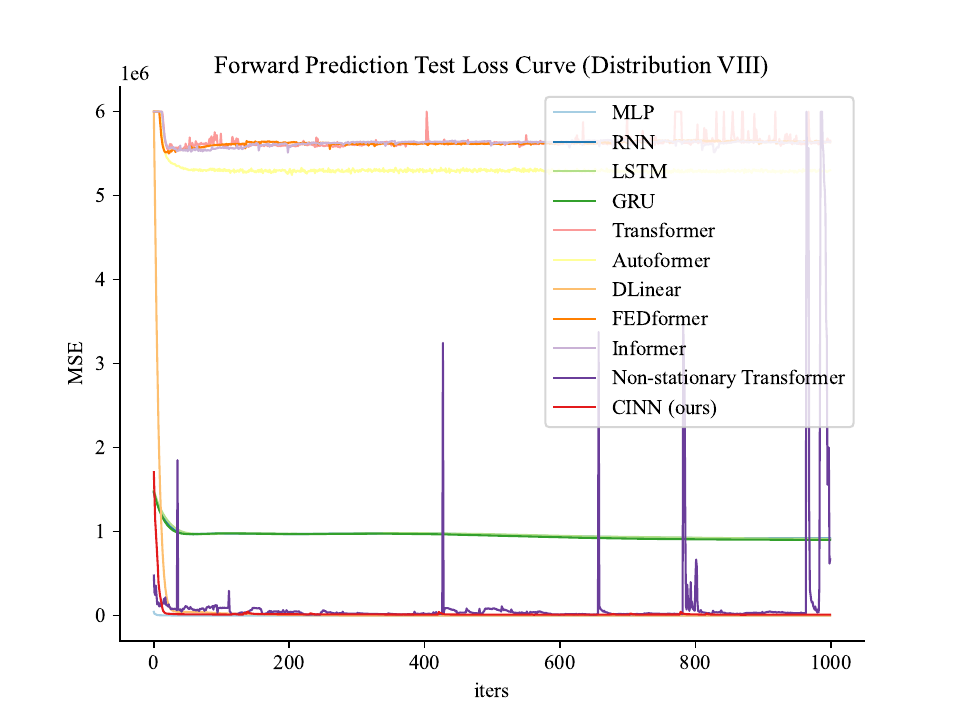}
        % \centerline{Distribution VIII}
    \end{minipage}

    \begin{minipage}[t]{0.24\linewidth}
        \centering 
        \includegraphics[width=\textwidth]{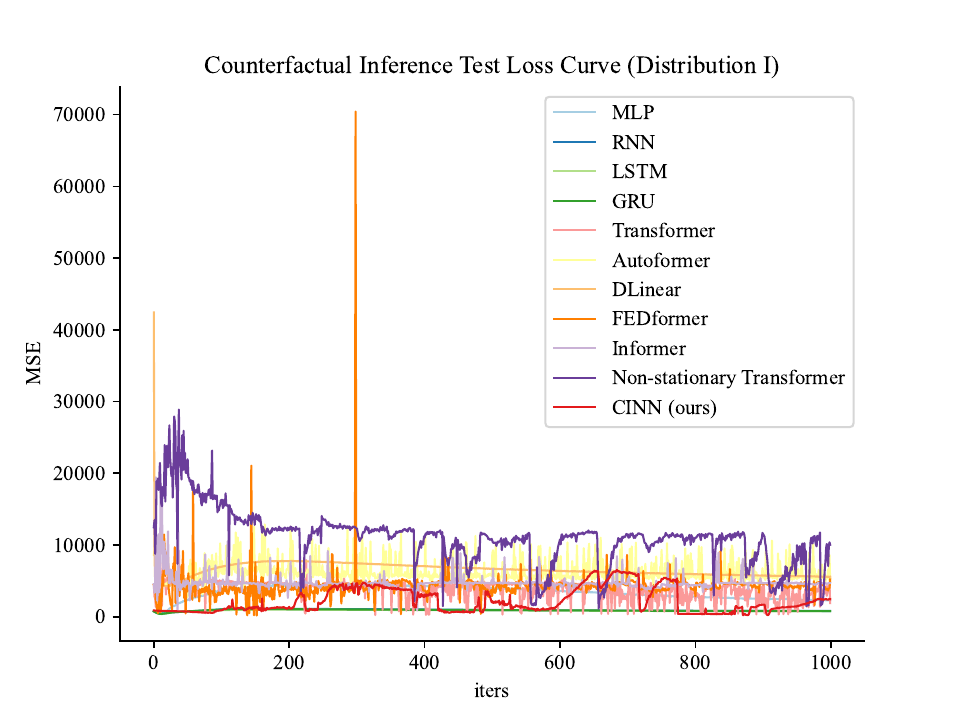}
        % \centerline{Distribution I}
    \end{minipage}%
    \begin{minipage}[t]{0.24\linewidth}
        \centering
        \includegraphics[width=\textwidth]{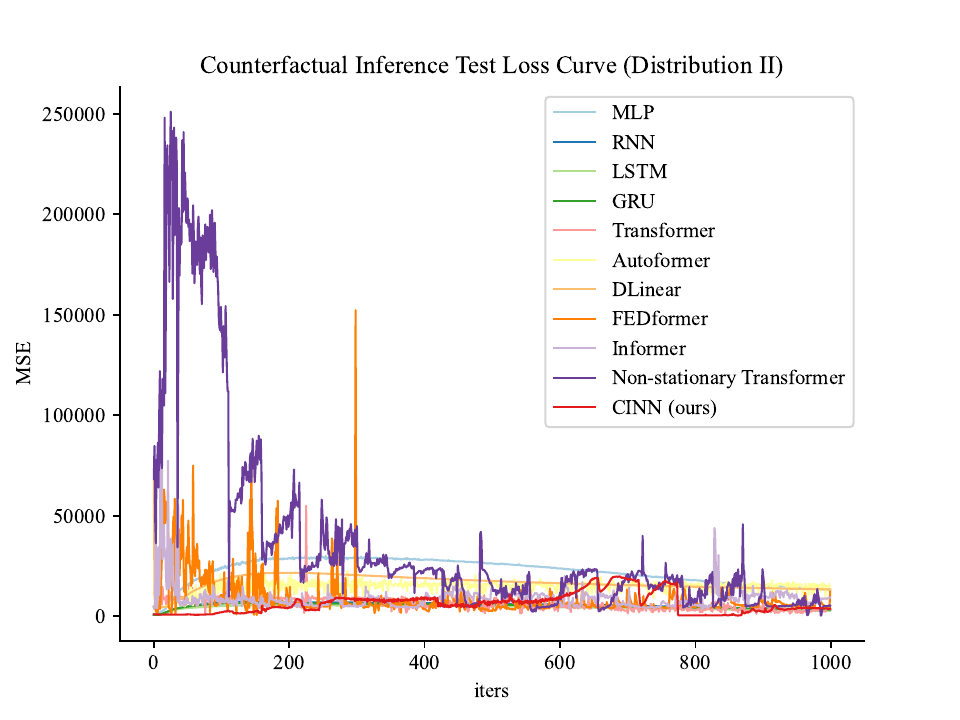}
        % \centerline{Distribution II}
    \end{minipage}
    \begin{minipage}[t]{0.24\linewidth}
        \centering 
        \includegraphics[width=\textwidth]{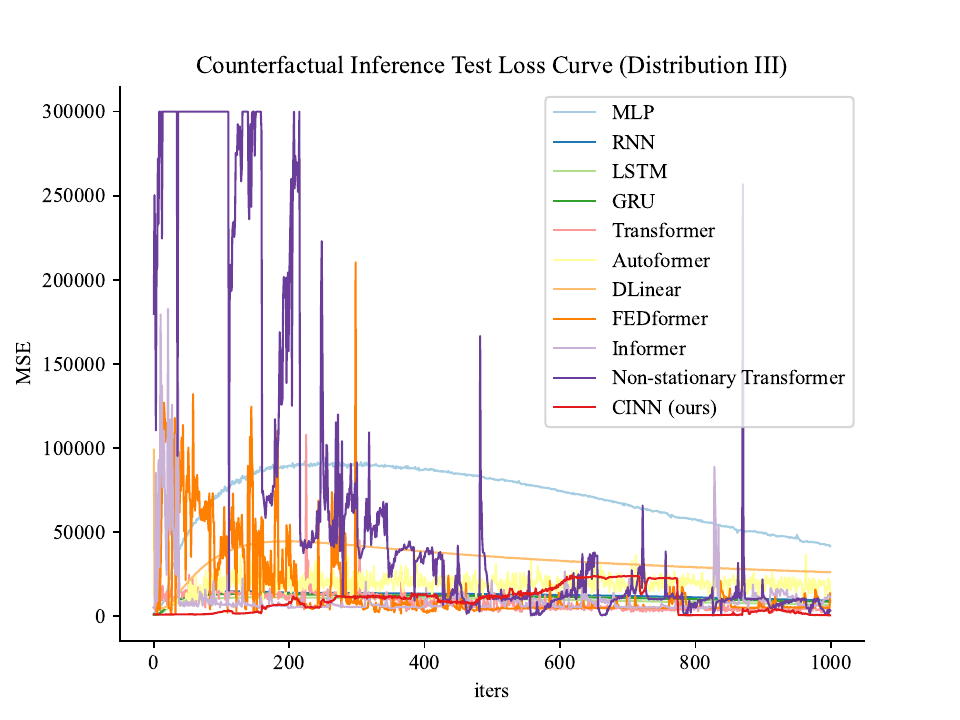}
        % \centerline{Distribution III}
    \end{minipage}%
    \begin{minipage}[t]{0.24\linewidth}
        \centering
        \includegraphics[width=\textwidth]{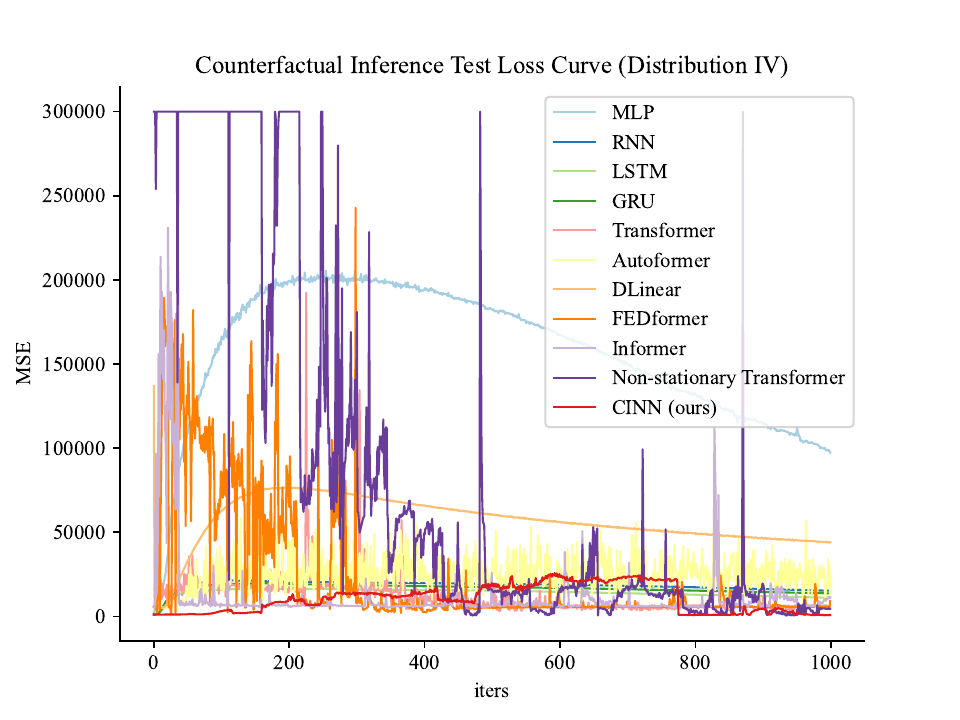}
        % \centerline{Distribution IV}
    \end{minipage}

    \begin{minipage}[t]{0.24\linewidth}
        \centering 
        \includegraphics[width=\textwidth]{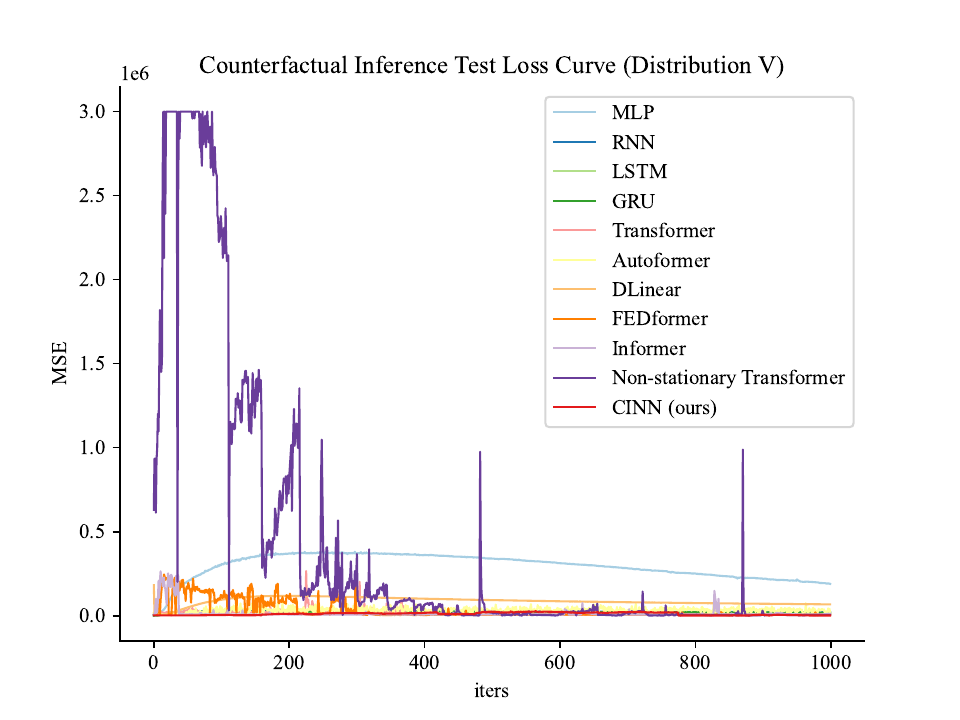}
        % \centerline{Distribution V}
    \end{minipage}%
    \begin{minipage}[t]{0.24\linewidth}
        \centering
        \includegraphics[width=\textwidth]{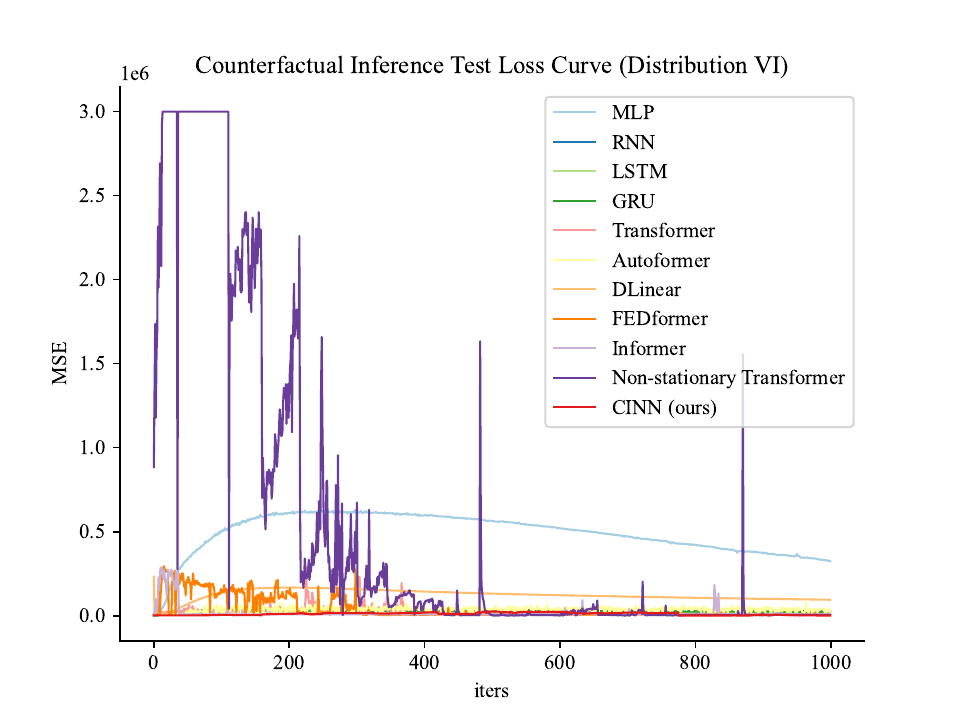}
        % \centerline{Distribution VI}
    \end{minipage}
    \begin{minipage}[t]{0.24\linewidth}
        \centering 
        \includegraphics[width=\textwidth]{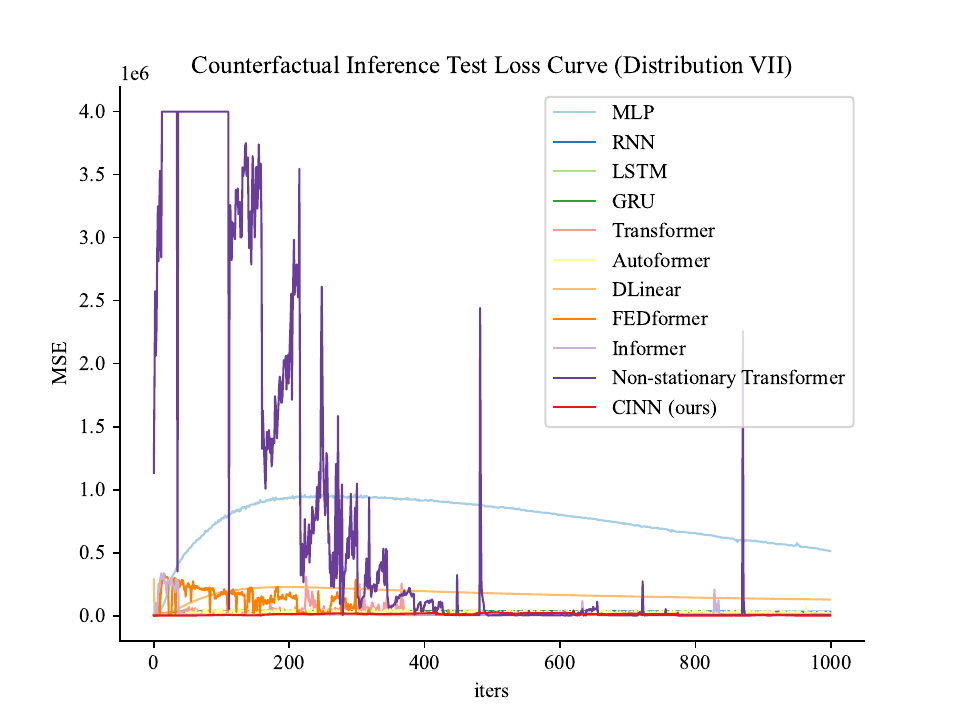}
        % \centerline{Distribution VII}
    \end{minipage}%
    \begin{minipage}[t]{0.24\linewidth}
        \centering
        \includegraphics[width=\textwidth]{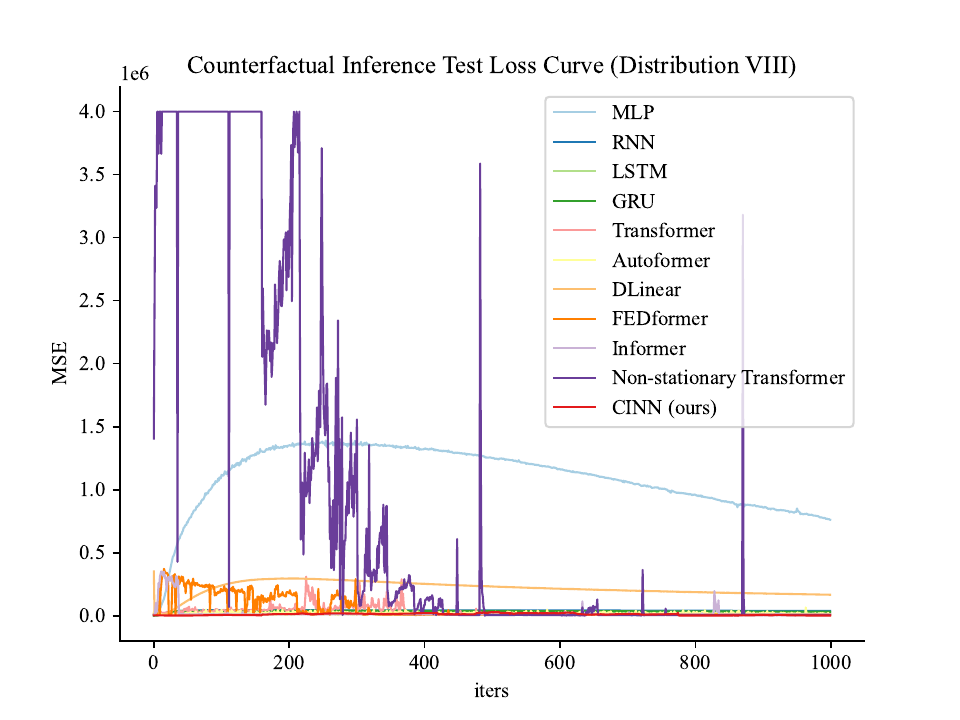}
        % \centerline{Distribution VIII}
    \end{minipage}

    \caption{Forward prediction and counterfactual inference of CINN and baselines under OOD scenarios.}
    \label{fig5}
\end{figure*}

\subsection{Prediction-driven Exploration}

We denote the RL policy as $\pi(s^t;\psi)$, represented by a deep neural network with parameters $\psi$. Given the state $s^t$ of patients with T1D, the agent executes the action $a^t\sim\pi(s^t;\psi)$ sampled from the policy. The parameter $\psi_P$ is optimized to maximize the expected sum of rewards, $\psi_P=\max_{\psi}\mathbb{E}_{\pi(s^t;\psi)}[\sum^t r^t]$. Similar to \citep{yu2023arlpe}, the reward $r^t$ means the degree of deviation of the patient's BG level $s^{t}_{G}$ from the target range $r=[70,180]$, where $r_{\min}=70$ and $r_{\max}=180$:

\begin{equation}
	\begin{aligned}
    r^t=&(\max(\min(s^{t}_{G}-r_{\min}, s^{t}_{G}-r_{\max}), \min(r_{\min}-s^{t}_{G}, r_{\max}-s^{t}_{G}), 0))^2
	\end{aligned}
    \label{equation4_1}
\end{equation}

Moreover, the environment is a simulator for patients with T1D \citep{man2014uva}. During the continuous interaction between the agent and the environment, the agent improves its understanding of the underlying dynamics of the environment. Once the agent comprehends the transition probabilities of states about patients with T1D, denoted as $p(s^{t+1}|s^t,a^t)$, it can accurately predict the future state, facilitating the generation of the optimal BG control policy. Conversely, discrepancies between the actual future state $s^{t+1}$ provided by the environment and the predicted state $\hat{s}^{t+1}$ can make the agent think about the inaccuracy in its predictions. The inaccuracy often manifests itself as a reward $r^t_i$, which motivates the agent to explore as curiosity \citep{pathakICMl17curiosity}. It is worth noting that this reward, as an intrinsic reward, is generated internally within the agent and cannot be directly observed. 
Different from curiosity, to take safer actions, we formalize the intrinsic reward as follows:

\begin{figure*}[t]
    \begin{minipage}[t]{0.33\linewidth}
        \centering 
        \includegraphics[width=\textwidth]{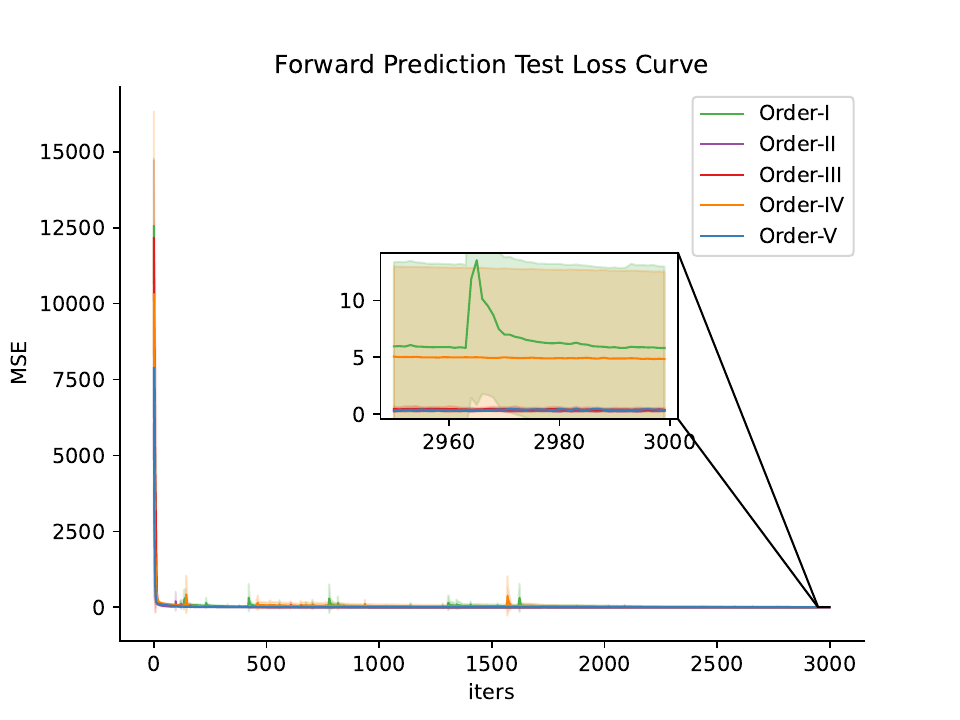}
        % \centerline{Distribution I}
    \end{minipage}%
    \begin{minipage}[t]{0.33\linewidth}
        \centering
        \includegraphics[width=\textwidth]{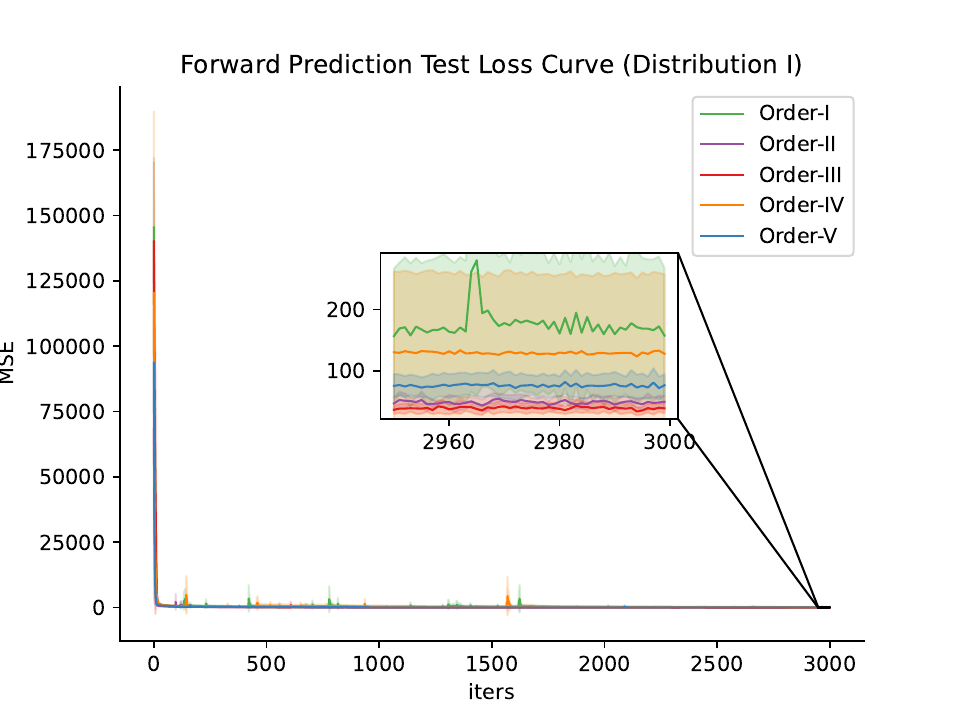}
        % \centerline{Distribution II}
    \end{minipage}
    \begin{minipage}[t]{0.33\linewidth}
        \centering 
        \includegraphics[width=\textwidth]{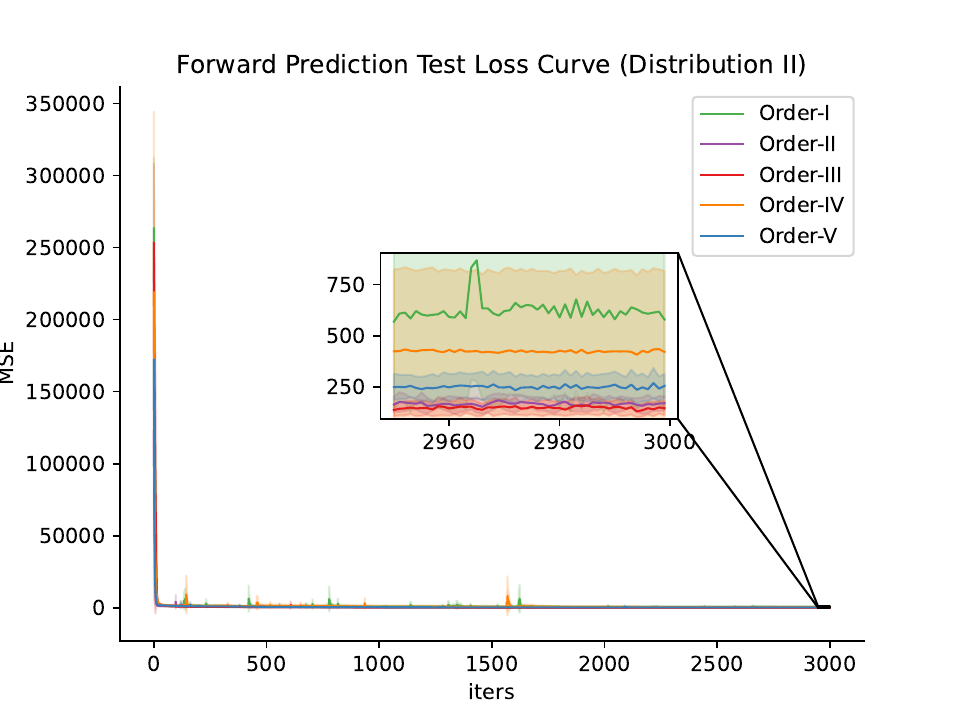}
        % \centerline{Distribution III}
    \end{minipage}%

    \begin{minipage}[t]{0.33\linewidth}
        \centering 
        \includegraphics[width=\textwidth]{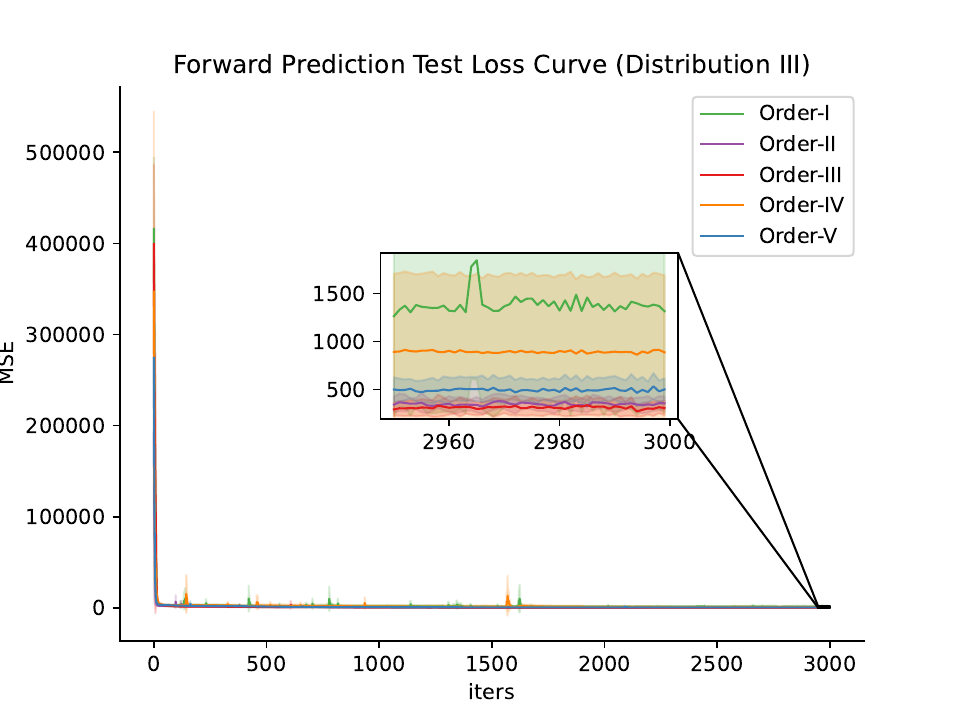}
        % \centerline{Distribution III}
    \end{minipage}%
    \begin{minipage}[t]{0.33\linewidth}
        \centering
        \includegraphics[width=\textwidth]{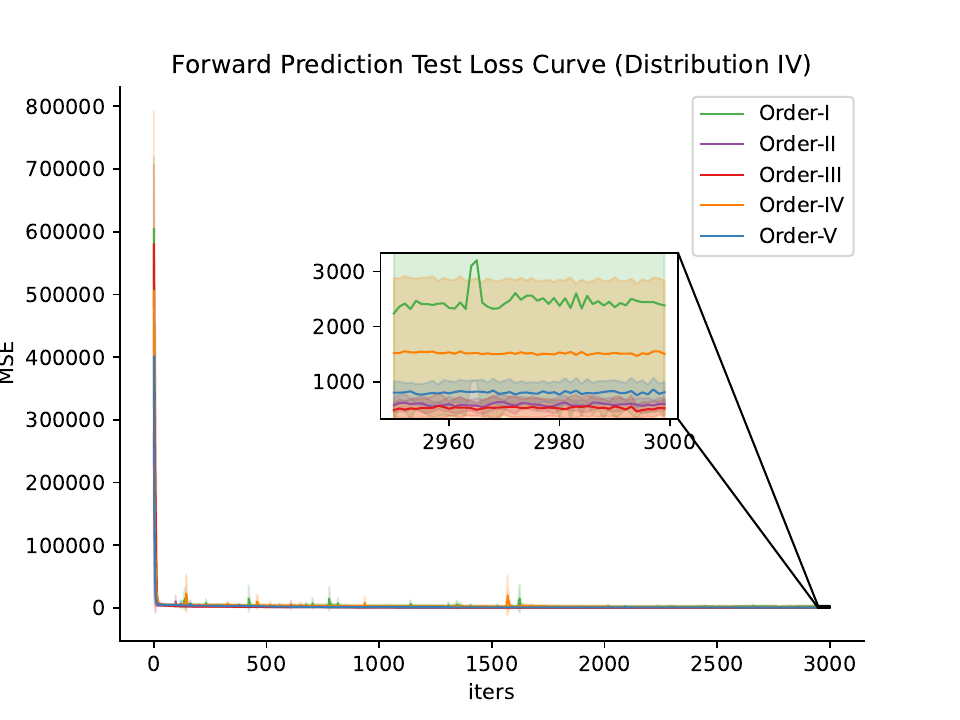}
        % \centerline{Distribution IV}
    \end{minipage}
    \begin{minipage}[t]{0.33\linewidth}
        \centering 
        \includegraphics[width=\textwidth]{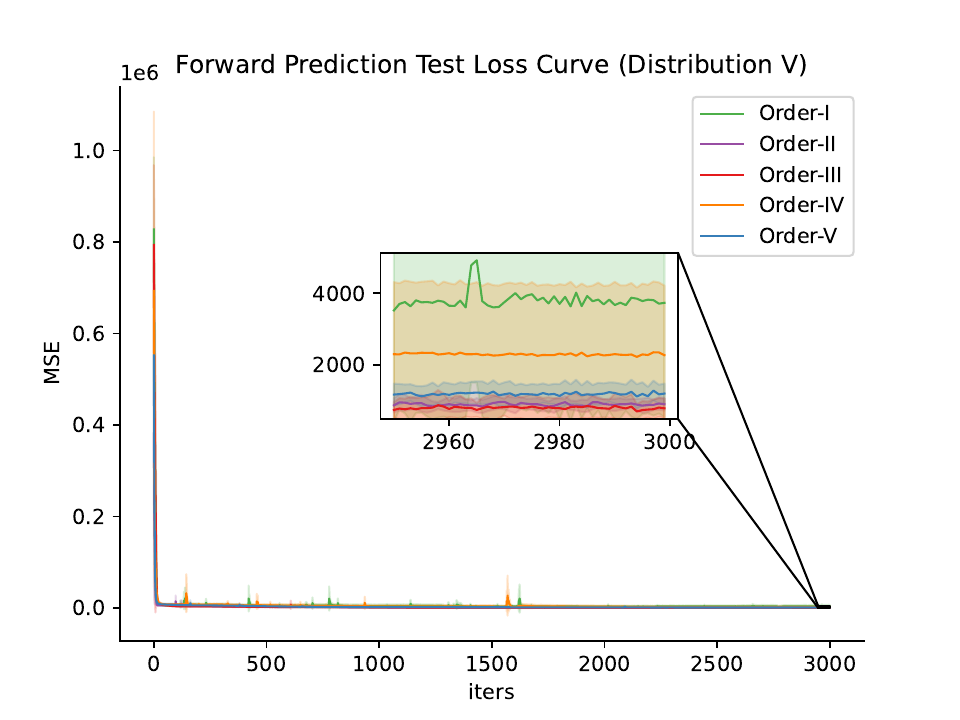}
        % \centerline{Distribution V}
    \end{minipage}%

    \begin{minipage}[t]{0.33\linewidth}
        \centering
        \includegraphics[width=\textwidth]{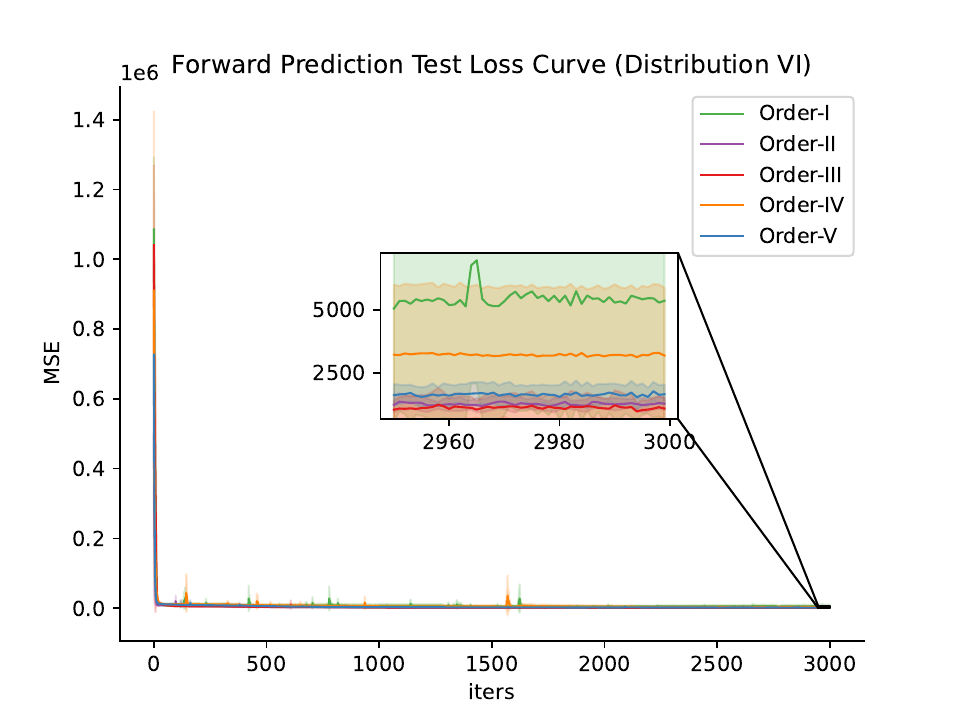}
        % \centerline{Distribution VI}
    \end{minipage}
    \begin{minipage}[t]{0.33\linewidth}
        \centering 
        \includegraphics[width=\textwidth]{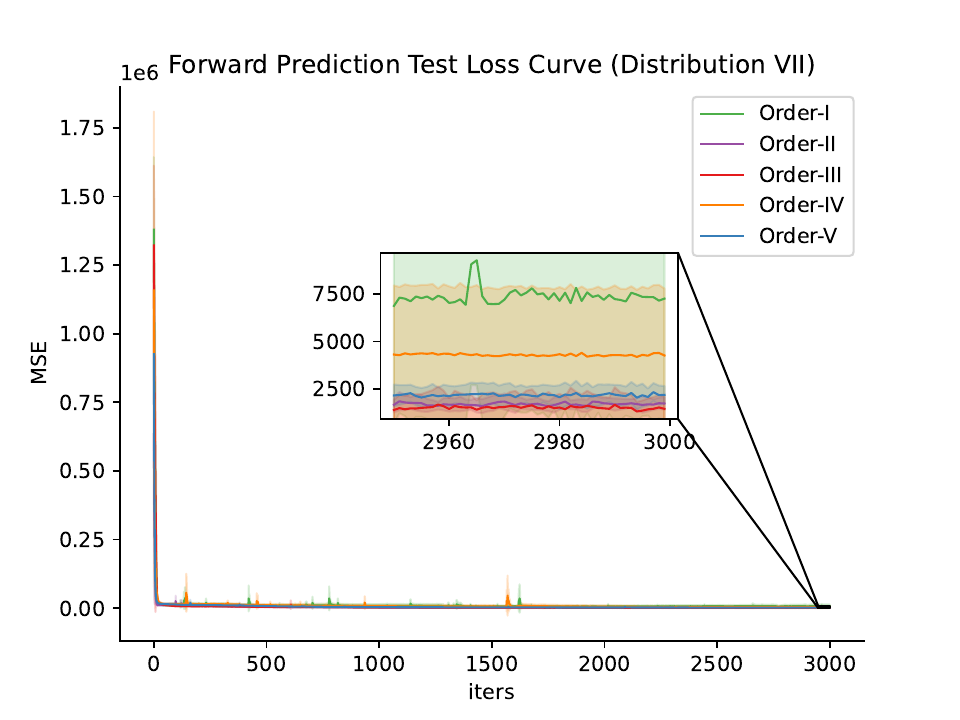}
        % \centerline{Distribution VII}
    \end{minipage}%
    \begin{minipage}[t]{0.33\linewidth}
        \centering
        \includegraphics[width=\textwidth]{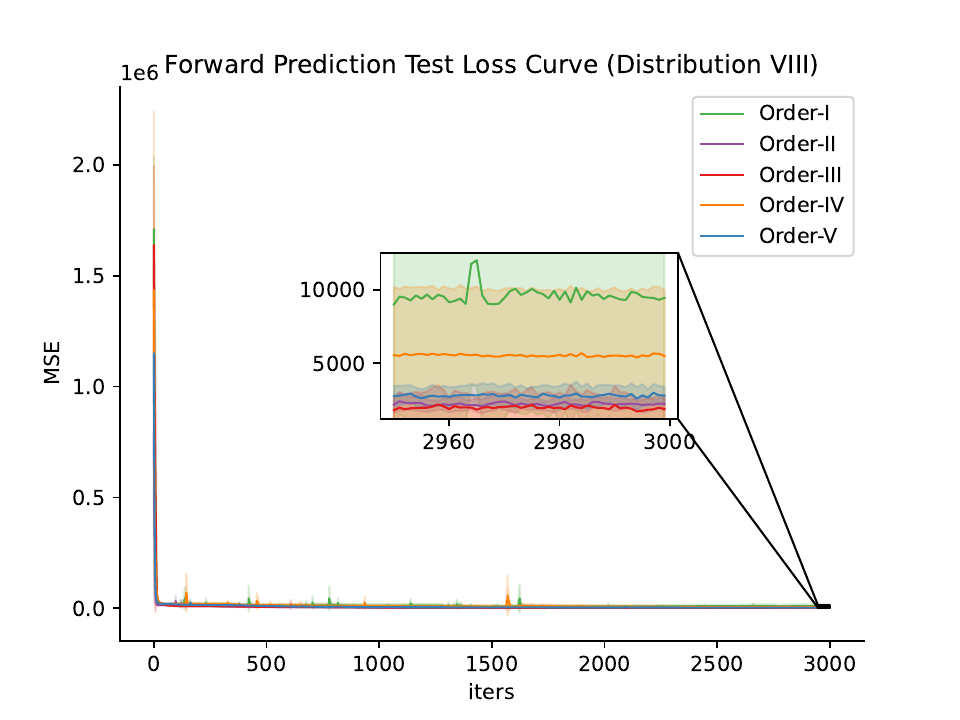}
        % \centerline{Distribution VIII}
    \end{minipage}

    \caption{Forward prediction of CINN under IID and OOD scenarios based on different causal topological orders.}
    \label{fig_app2}
\end{figure*}

\begin{figure*}[t]
    \begin{minipage}[t]{0.33\linewidth}
        \centering 
        \includegraphics[width=\textwidth]{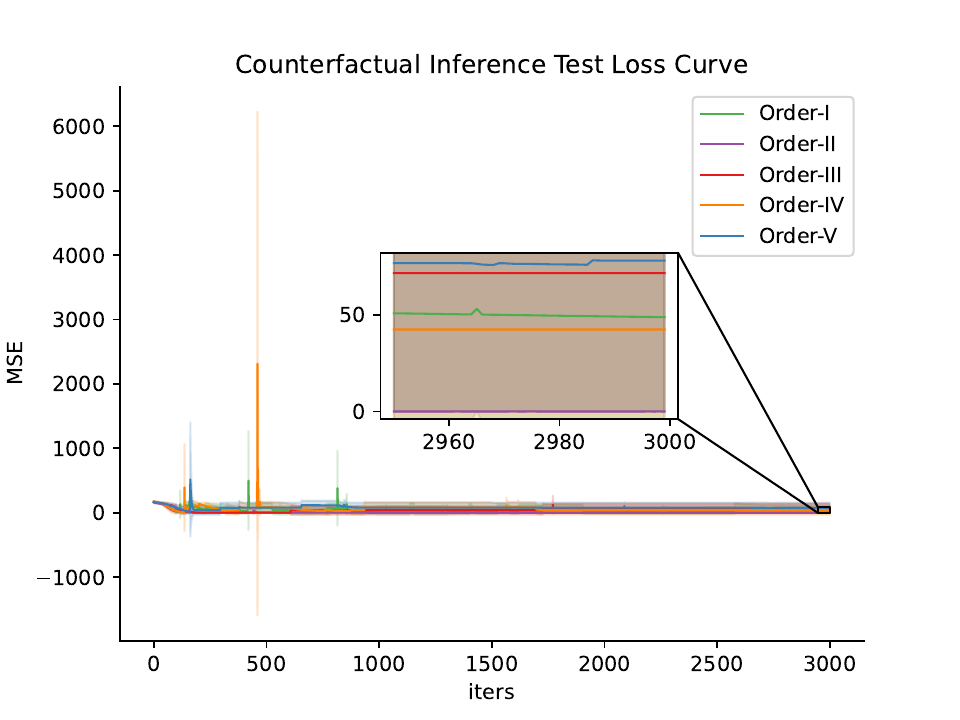}
        % \centerline{Distribution I}
    \end{minipage}%
    \begin{minipage}[t]{0.33\linewidth}
        \centering
        \includegraphics[width=\textwidth]{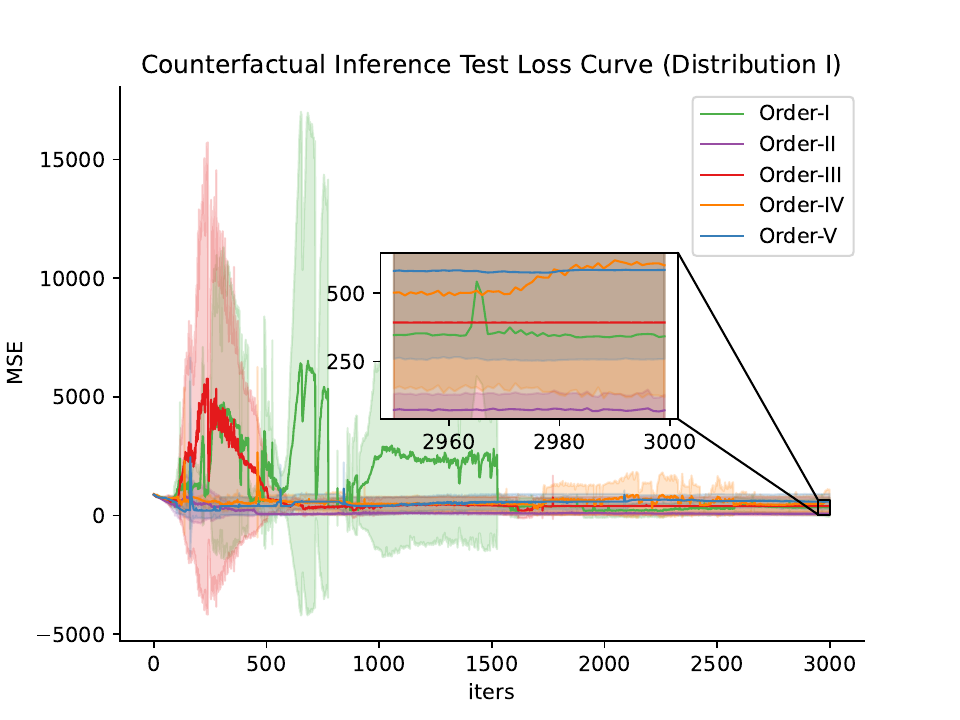}
        % \centerline{Distribution II}
    \end{minipage}
    \begin{minipage}[t]{0.33\linewidth}
        \centering 
        \includegraphics[width=\textwidth]{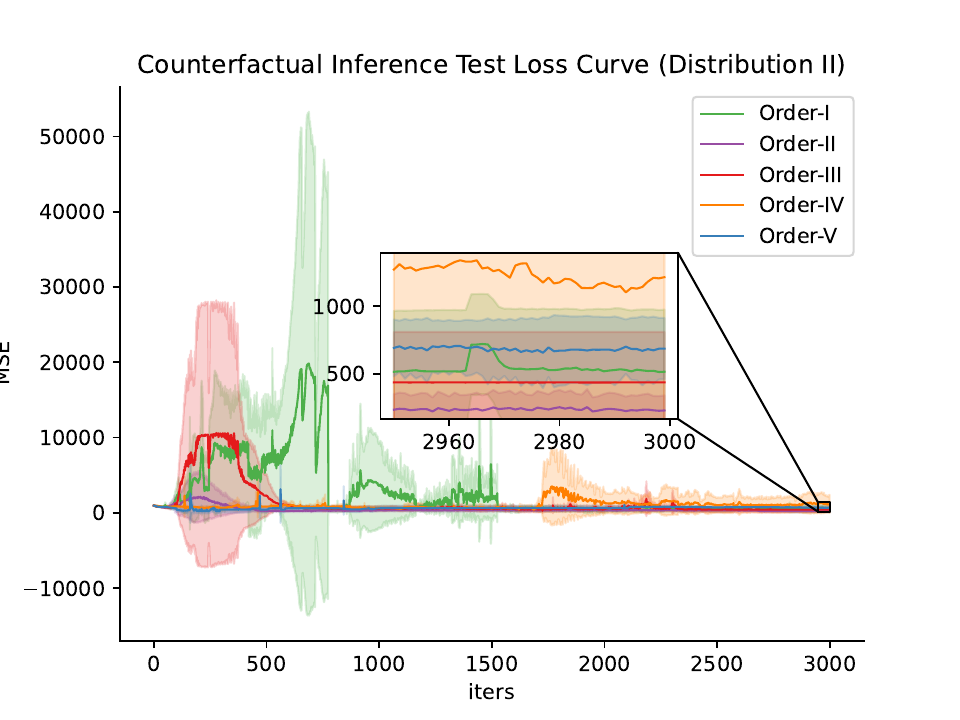}
        % \centerline{Distribution III}
    \end{minipage}%

    \begin{minipage}[t]{0.33\linewidth}
        \centering 
        \includegraphics[width=\textwidth]{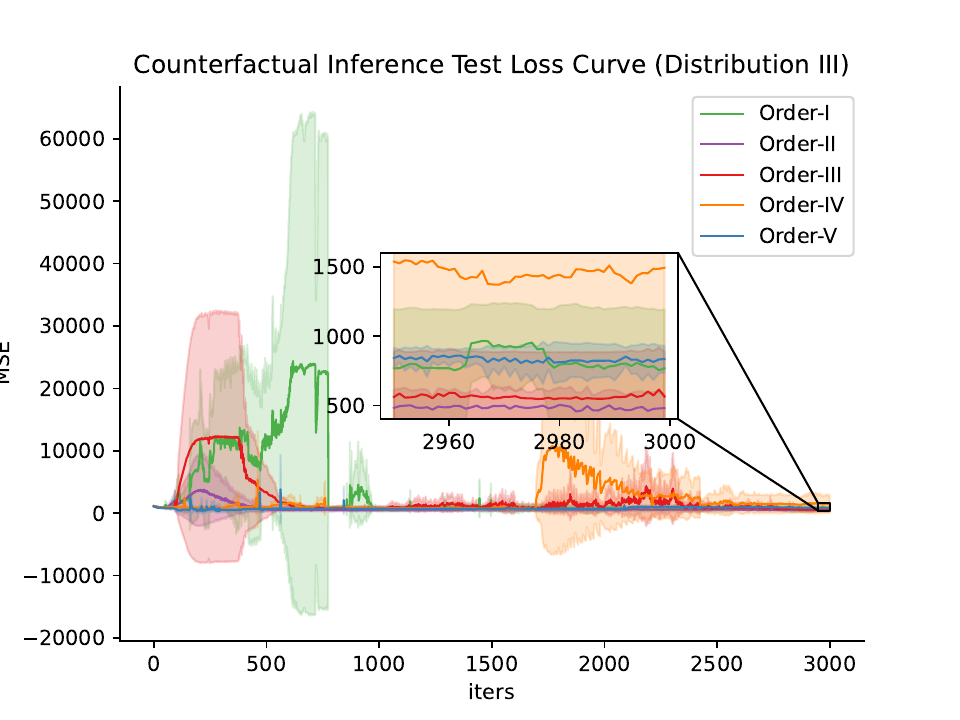}
        % \centerline{Distribution III}
    \end{minipage}%
    \begin{minipage}[t]{0.33\linewidth}
        \centering
        \includegraphics[width=\textwidth]{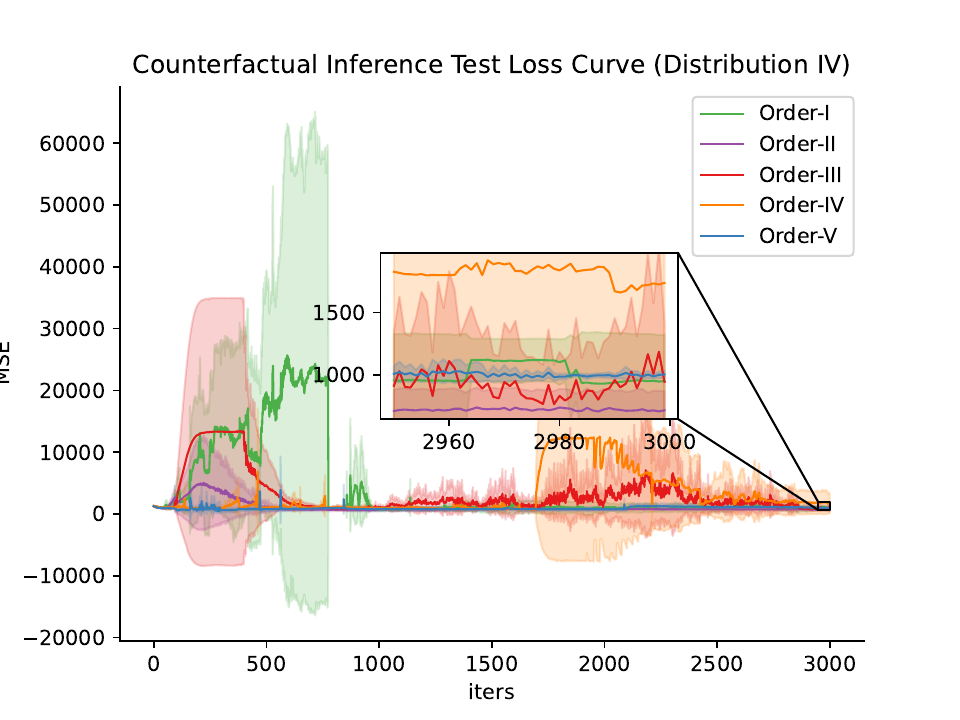}
        % \centerline{Distribution IV}
    \end{minipage}
    \begin{minipage}[t]{0.33\linewidth}
        \centering 
        \includegraphics[width=\textwidth]{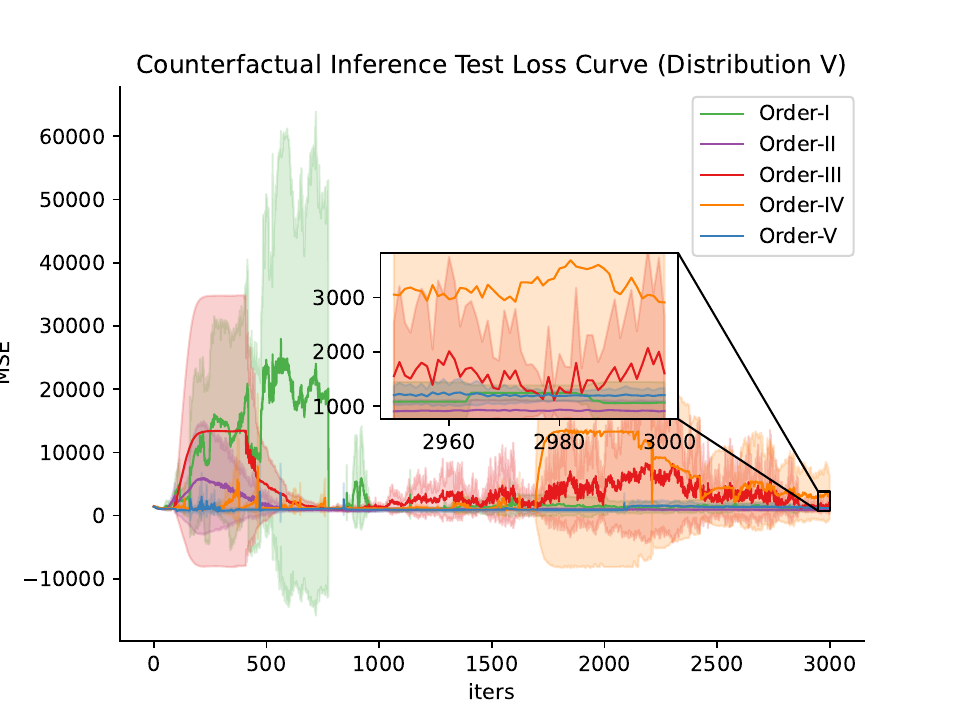}
        % \centerline{Distribution V}
    \end{minipage}%

    \begin{minipage}[t]{0.33\linewidth}
        \centering
        \includegraphics[width=\textwidth]{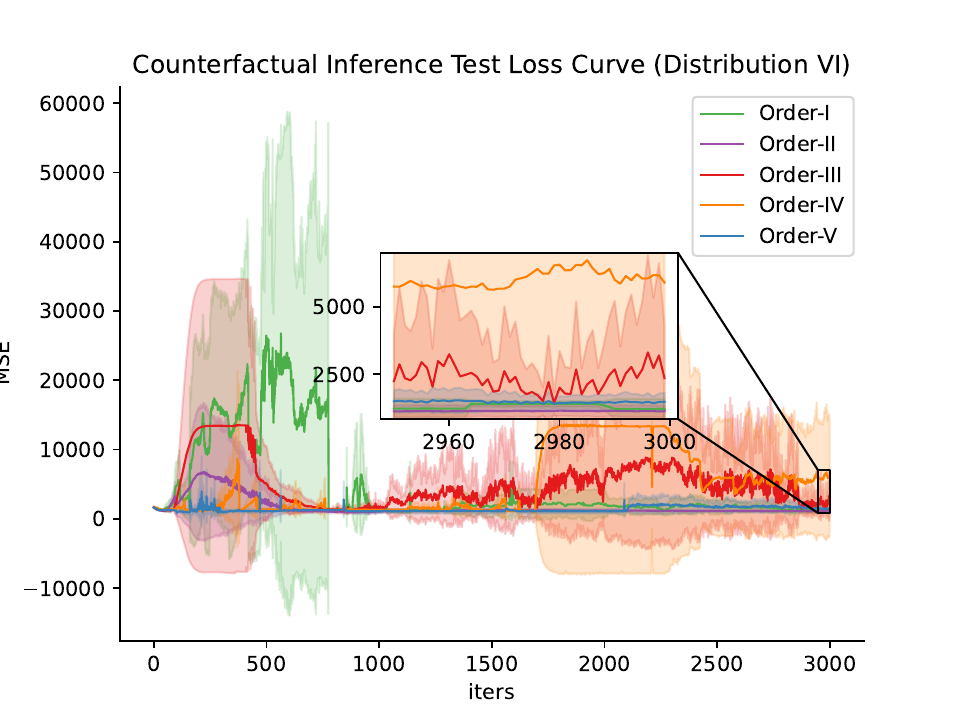}
        % \centerline{Distribution VI}
    \end{minipage}
    \begin{minipage}[t]{0.33\linewidth}
        \centering 
        \includegraphics[width=\textwidth]{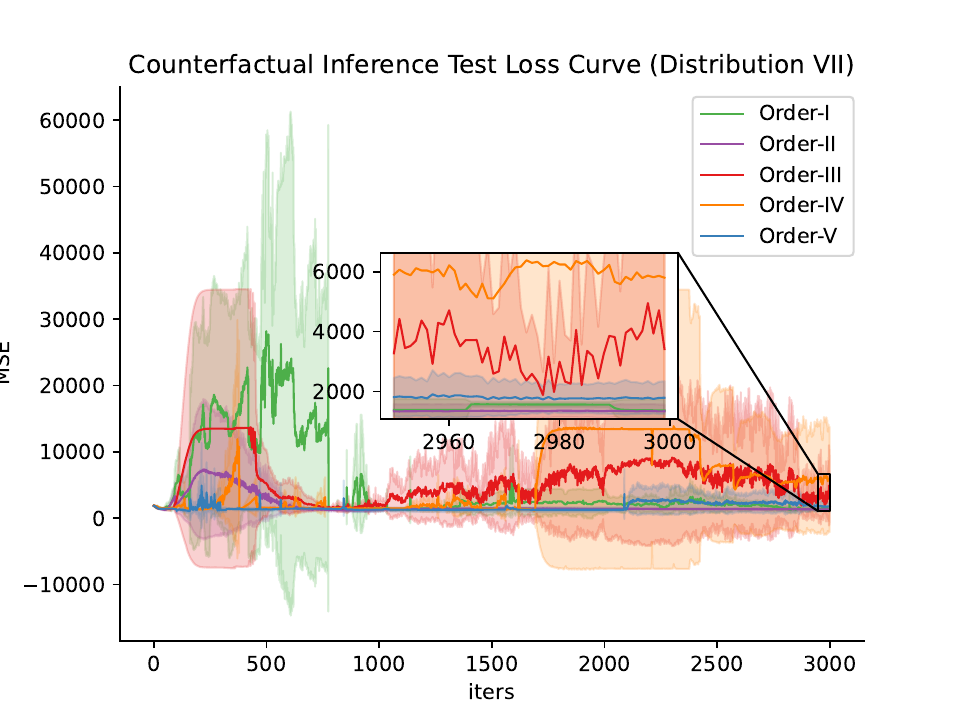}
        % \centerline{Distribution VII}
    \end{minipage}%
    \begin{minipage}[t]{0.33\linewidth}
        \centering
        \includegraphics[width=\textwidth]{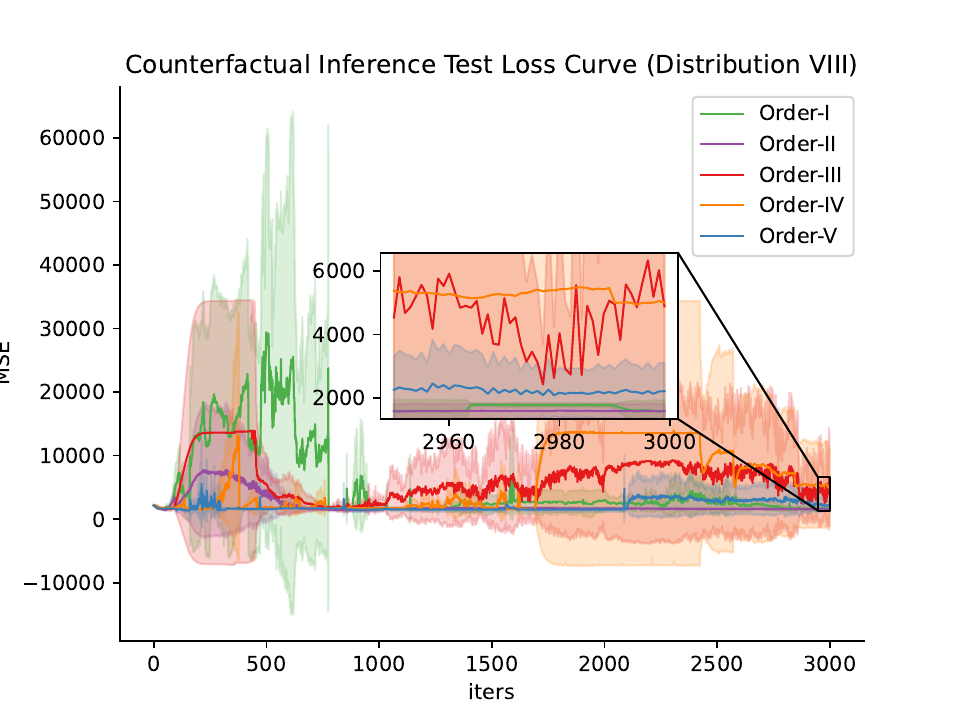}
        % \centerline{Distribution VIII}
    \end{minipage}

    \caption{Counterfactual inference of CINN under IID and OOD scenarios based on different causal orders.}
    \label{fig_app3}
\end{figure*}

\begin{equation}
	\begin{aligned}
    r^t_i=-\frac{\eta}{2}||s^{t+1}-F(s^t,a^t)||_2^2
	\end{aligned}
    \label{equation6}
\end{equation}

where $\eta>0$ is a scaling factor (default 1) and $F(s^t,a^t)$ is the estimated state transition function. Surprisingly, the state transition function $\hat{s}^{t+1}=F(s^t,a^t; W)$ exactly matches the forward prediction of CINN. Therefore, as shown in Figure \ref{fig3}, the forward prediction module, which is similar to the Markov decision process in RL, uses the forward prediction of CINN to output the intrinsic reward $r^t_i$.

\subsection{Introspection-driven Exploitation}

However, the RL undoubtedly introduces an element of inappropriate random exploration, potentially leading to unsafe or unstable outcomes \citep{garcia2015comprehensive,emerson2023offline}. We address mindless exploitation of the environment during agent-environment interaction through pre-action introspection. By analyzing the current and future states of T1D patients, the agent can actively infer the optimal action to achieve the desired state. The output of the counterfactual inference in CINN, $\hat{a}^t=I(s^t,s^{t+1})$, serves as the predicted estimate of the action $a^t=\pi(s^t;\psi)$ generated by the policy. Essentially, $\hat{a}^t$ reflects the agent's deliberation before taking the action $a^t$, and the primary goal of training is to synchronize the thought process with the action execution. Therefore, the policy is trained to optimize:

\begin{equation}
	\begin{aligned}
        \min_{\psi}L_I(I(s^t,s^{t+1}),\pi(s^t;\psi))
	\end{aligned}
    \label{equation8}
\end{equation}

where $L_I$ is the loss function that measures the discrepancy between the counterfactuals and the actual actions. The final optimization function is:

\begin{equation}
	\begin{aligned}
        \min_{\psi}[-\lambda\mathbb{E}_{\pi(s^t;\psi)}[\sum^t r^t]+(1-\beta)L_I+\beta L_F]
	\end{aligned}
    \label{equation9}
\end{equation}

where $0\le\beta\le 1$ is a scalar (default 0.5) that weights the counterfactual loss against the prediction loss, and $\lambda>0$ is a scalar (default 1) that weights the importance of the policy gradient loss against that of learning the introspection-driven loss.

\section{Experiments}

The effectiveness of CINN and our proposed introspective RL will be evaluated using the Diabetes Mellitus Metabolic Simulator for Research (DMMS.R) \citep{man2014uva}, a specialized computer application approved by the US FDA for conducting virtual clinical trials. This simulator provides real-world health records of patients with T1D and an interactive platform for managing BG levels. This section includes two different sets of experiments. First, supervised learning experiments are conducted using DMMS.R to validate CINN's forward prediction and counterfactual inference capabilities. Then, we perform BG control experiments to evaluate the safety and stability of our introspective RL.

\subsection{Experimental Setup\label{section51}}

\subsubsection{Supervised Learning Experiment Description} 

We collect the dataset of nine action distributions from DMMS.R, where the same patient receive different insulin doses while adhering to identical dietary standards daily. This dataset encapsulates patient state trajectories under different exogenous insulin dose distributions. To evaluate forward prediction accuracy under IID and OOD scenarios, we use intervention trajectories from one type of distribution for training and all nine types of distributions for testing.

For comparative analysis, we considered four basic forward prediction models: MLP \citep{taud2018multilayer}, RNN \citep{schuster1997bidirectional}, LSTM \citep{hochreiter1997long}, GRU \citep{cho2014learning}, and six state-of-the-art models: Transformer \citep{vaswani2017attention}, Autoformer \citep{wu2021autoformer}, Informer \citep{zhou2021informer}, FEDformer \citep{zhou2022fedformer}, Non-stationary Transformer \citep{liu2022non}, and DLinear \citep{zeng2023transformers} as baselines. Notably, the above ten baselines lack the intrinsic capacity for counterfactual inference, while CINN is the first model capable of performing both forward prediction and counterfactual inference within a unified network architecture. Therefore, we introduce a masking mechanism that adjusts inputs, allowing all of the baselines above to apparently perform forward prediction and counterfactual inference simultaneously. During forward prediction, a mask is applied to hide $s^{t+1}$ information, allowing baselines to receive the current state $s^t$ and action $a^t$ for predicting the subsequent state: $\hat{s}^{t+1},[\rm{mask}] = f(s^t, [\rm{mask}], a^t)$. During counterfactual inference, we mask the intervention information $a^t$ in the input so that the baselines receive only the current state $s^t$ and the desired state $\tilde{s}^{t+1}$ to perform counterfactual inference for the optimal intervention action: $[\rm{mask}], \hat{a}^{t} = f(s^t, \tilde{s}^{t+1}, \rm{mask})$.

\subsubsection{Reinforcement Learning Experiment Description} 

We treat DMMS.R as an interactive environment and use the adolescent patient parameters to develop the BG control policy. To complement the standard SAC, we integrate the pre-trained CINN as an introspective block to furnish prediction-driven and introspective-driven rewards for policy updates. In addition to our approach, we employ A2C \citep{mnih2016asynchronous}, PPO \citep{schulman2017proximal}, SAC \citep{haarnoja2018soft}, SAC+ICM \citep{pathakICMl17curiosity} as baselines.

\subsection{Evaluation Metrics}

We use the mean square error (MSE) to evaluate the accuracy of forward prediction and counterfactual inference, which is calculated as follows:

\begin{equation}
    \begin{split}
        \text{MSE}(s^{t+1},\hat{s}^{t+1})&=\frac{1}{n\times T}\sum_t(s^{t+1}-\hat{s}^{t+1})^2\\
        \text{MSE}(a^{t},\hat{a}^{t})&=\frac{1}{2\times T}\sum_t(a^{t}-\hat{a}^{t})^2
        \label{4_1}
    \end{split}
\end{equation}

Here, $\hat{s}^{t+1}\in\mathbb{R}^{n}$ represents the predicted future state in forward prediction, while $s^{t+1}$ denotes the corresponding ground-truth state with $T$ time steps for action trajectories. Additionally, $\hat{a}^{t}\in\mathbb{R}^{2}$ represents the action inferred through counterfactual inference, with $a^{t}$ denoting the corresponding ground-truth action. Note that in DMMS.R, $n=13$.

\subsection{Implementation details}

CINN and all of the above baselines are optimized using the Adam optimizer with an initial learning rate of $10^{-3}$. The batch size is set to 32. CINN is implemented in PyTorch and conducted on a single NVIDIA GeForce RTX 3090 24GB GPU. The code is available at \url{https://github.com/HITshenrj/CINN}.

\subsection{Experimental Results about Forward Prediction and Counterfactual Inference under IID Scenarios.
\label{section52}}

The results of the supervised learning experiments in IID scenarios are shown in Figure \ref{fig4}. It is evident that the ten baseline models exhibit robust performance in forward prediction. In addition, despite its reliance on simple FNNs, CINN demonstrates remarkable capabilities in forward prediction, contributing to stable training and convergence.

Concerning counterfactual inference, the four basic models exhibit smaller training and testing MSEs compared to the Transformer-based models. However, we note that a possible reason for this performance gap is overfitting. In addition, the six state-of-the-art models show significant fluctuations in loss metrics. In contrast, CINN maintains relative stability during counterfactual inference. This suggests that CINN can analyze the relationships between the states of patients with T1D, exogenous insulin doses, and carbohydrate intake through its network structure, which is constructed based on causal relationships rather than correlation. As a result, CINN achieves accurate forward prediction and counterfactual inference.

\subsection{Generalization Experimental Results about Forward Prediction and Counterfactual Inference\label{section53}}

As shown in Figure \ref{fig5}, our generalization experiments show that MLP, Non-stationary Transformer, DLinear, and CINN exhibit superior forward prediction performance in OOD scenarios. However, regarding counterfactual inference, MLP and Non-stationary Transformer fail to deliver satisfactory results and exhibit extreme instability, largely due to their heavy reliance on correlation. As a result, they have difficulty capturing the underlying data generation process and struggle to adapt to varying data distributions. Although DLinear outperforms other baselines in both forward prediction and counterfactual inference, it still exhibits a noticeable gap with CINN.

CINN shows robust performance in both forward prediction and counterfactual inference, maintaining minimal losses regardless of the action distribution. This effectiveness of CINN can be attributed to its interpretable construction generation based on causal topological order, which ensures consistency despite shifts in data distribution. In addition, CINN's network structure is adept at bidirectional computation, allowing it to perform forward prediction and counterfactual inference simultaneously.

\begin{figure*}[t]
    \begin{minipage}[t]{0.33\linewidth}
        \centering 
        \includegraphics[width=\textwidth]{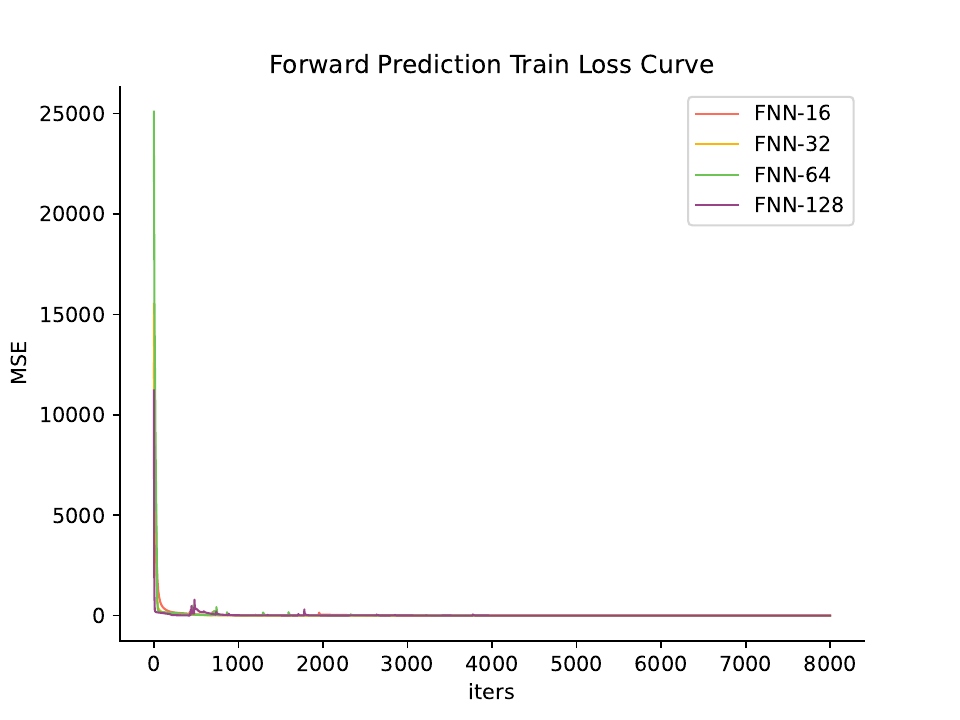}
        % \centerline{Distribution I}
    \end{minipage}%
    \begin{minipage}[t]{0.33\linewidth}
        \centering
        \includegraphics[width=\textwidth]{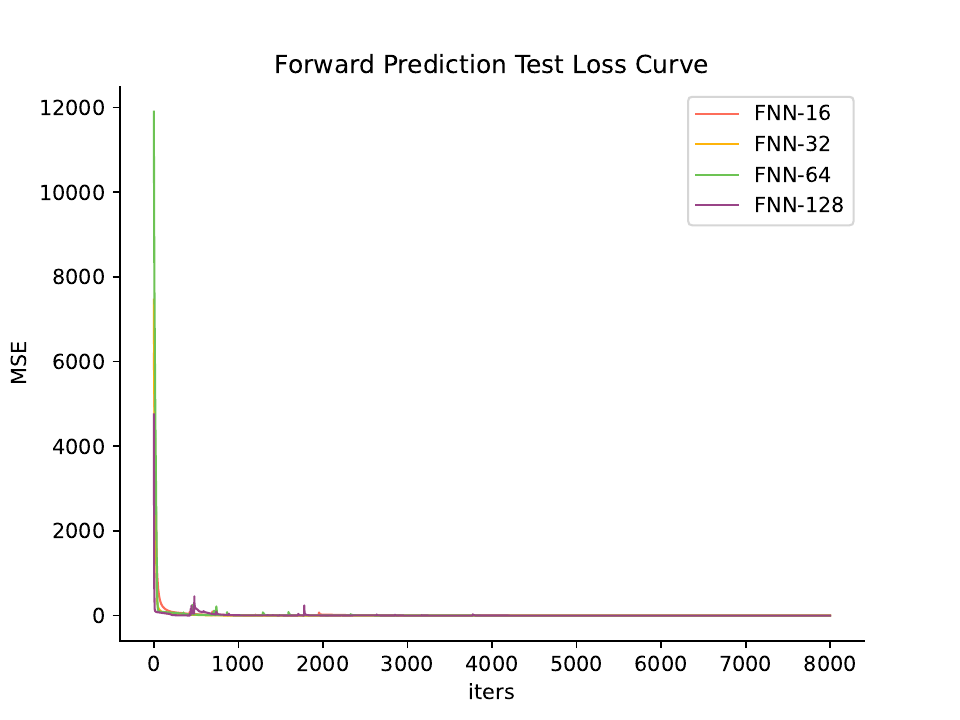}
        % \centerline{Distribution II}
    \end{minipage}
    \begin{minipage}[t]{0.33\linewidth}
        \centering 
        \includegraphics[width=\textwidth]{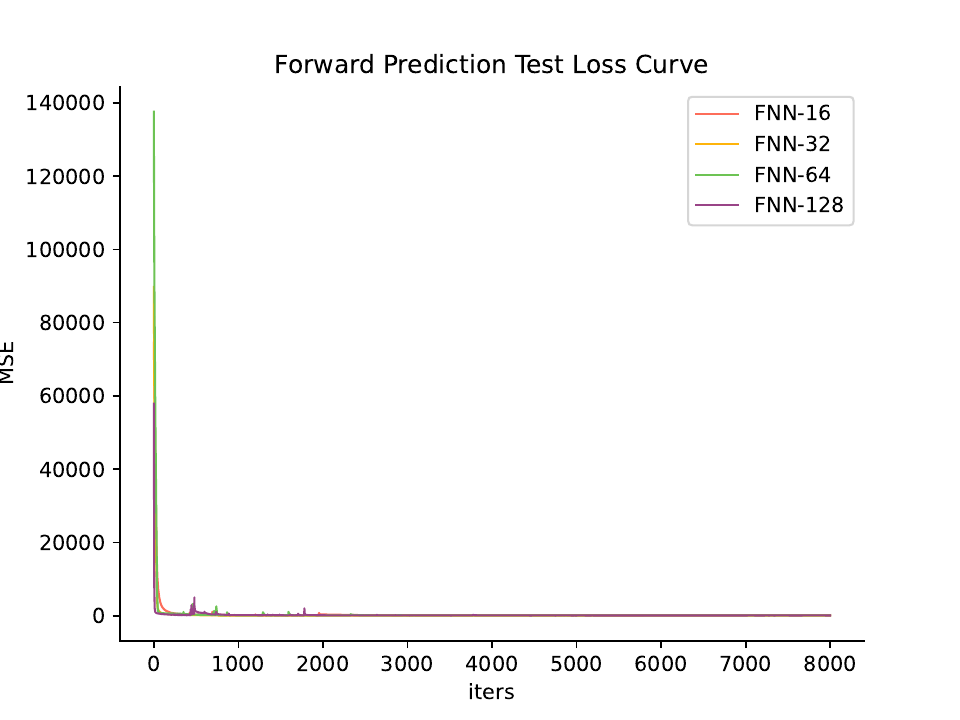}
        % \centerline{Distribution III}
    \end{minipage}%

    \begin{minipage}[t]{0.33\linewidth}
        \centering 
        \includegraphics[width=\textwidth]{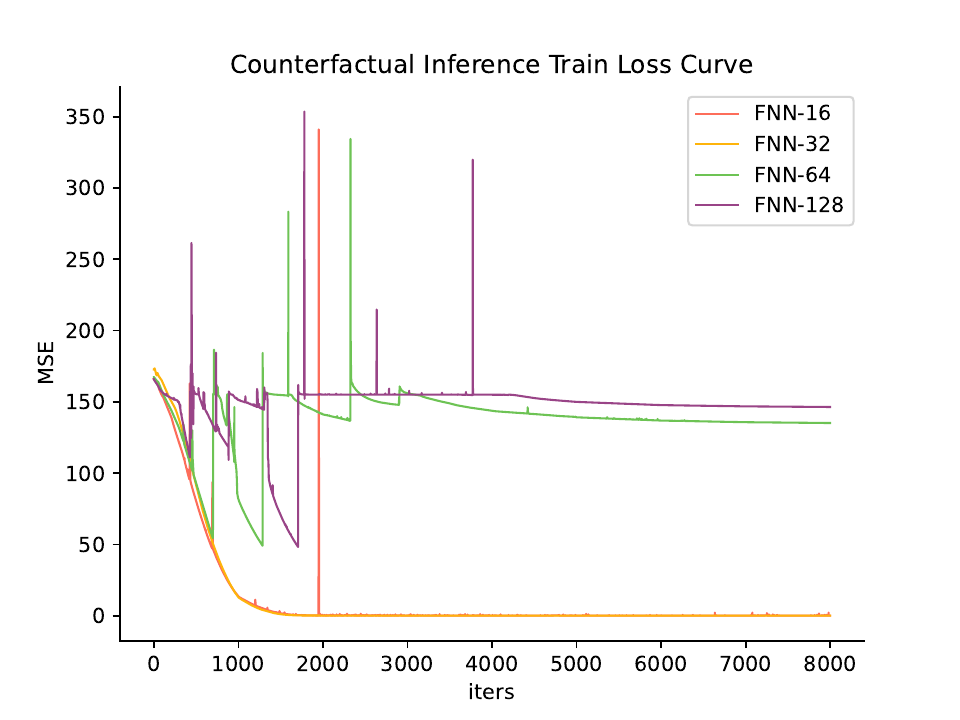}
        % \centerline{Distribution III}
    \end{minipage}%
    \begin{minipage}[t]{0.33\linewidth}
        \centering
        \includegraphics[width=\textwidth]{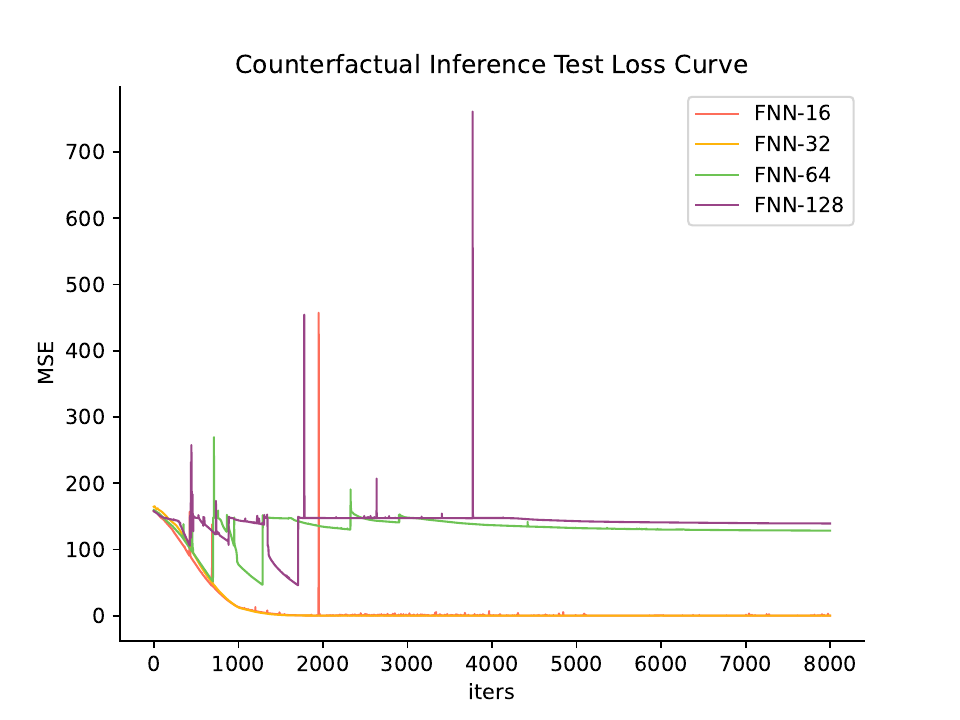}
        % \centerline{Distribution IV}
    \end{minipage}
    \begin{minipage}[t]{0.33\linewidth}
        \centering 
        \includegraphics[width=\textwidth]{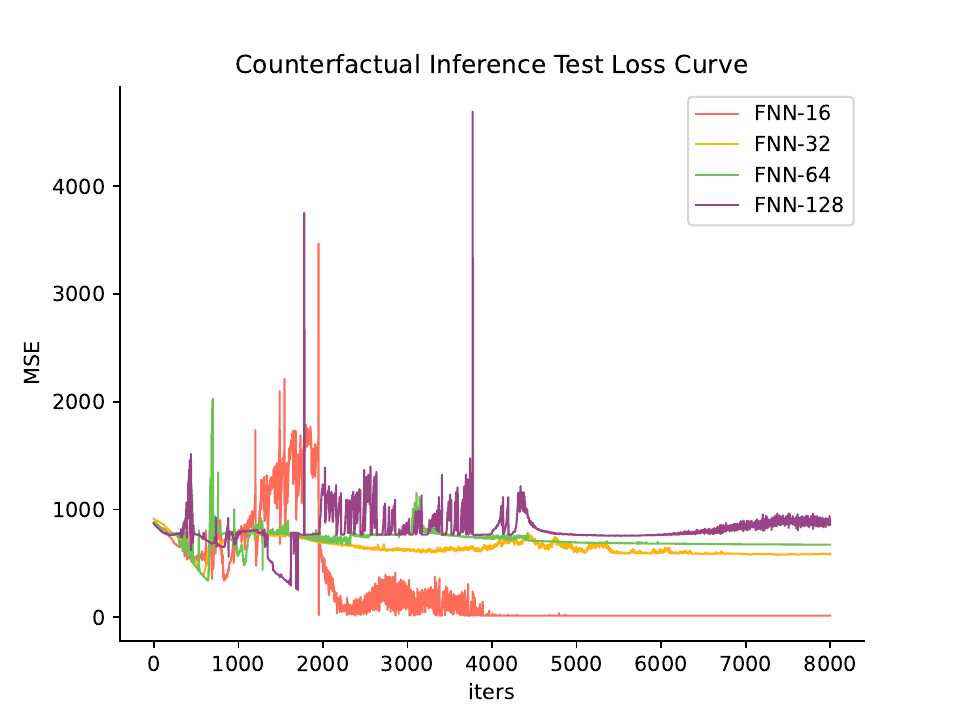}
        % \centerline{Distribution V}
    \end{minipage}%

    \caption{Ablation Experiments on Parameter Scale of FNN.}
    \label{fig_app4}
\end{figure*}

\subsection{Ablation Experiments For CINN}

In the following experiments, we first conduct an ablation study on the causal topological orders to assess its significant impact on CINN. Additionally, we perform another ablation study on the CINN parameter scales to elucidate hyperparameters' role in influencing forward prediction and counterfactual inference.

\subsubsection{Ablation Experiments on Topological Orders}

To showcase the pivotal role of causal graphs in CINN, we carry out ablation experiments on the causal topological order used to construct the network architecture. The original causal topological order (Order-I) is based on the dynamic equation in DMMS.R. We generate four other orders (Order-II to Order-V) by randomly transforming the original causal topological order.

As depicted in Figure \ref{fig_app2}, the experimental results indicate that all modified models exhibit favorable predictive performance during the forward prediction process. With the gradual escalation of the differences in the data distributions (ranging from Distribution-I to Distribution-VIII), the loss incurred by each model also increases, but remains within an acceptable range. The counterfactual inference experiments of CINN under OOD scenarios with different causal topological orders, are illustrated in Figure \ref{fig_app3}. Order-I yields optimal test results, showcasing remarkable stability. This observation underscores the imperative of structuring neural networks strictly according to causal relationships, thereby integrating fundamental causal knowledge into the model. On the contrary, variant models beyond the Order-I and Order-II exhibit substantial fluctuations, likely stemming from the deviation of the causal topological orders derived via randomly transformation from the true causal relationships. This mismatch leads to neural network configurations that fail to accurately represent the underlying causal relationships among variables. Thus, CINN relies on adherence to the causal order corresponding to the causal graph, with the incorporation of prior causal graphs significantly improving the model's performance.

\subsubsection{Ablation Experiments on Parameter Scales}

We perform ablation experiments on the parameter size of the FNN within the CINN. In this ablation analysis, we keep the FNN structure constant while varying only the size of the hidden dimension. The hidden layer sizes are set to 16 (default), 32, 64, and 128 dimensions, respectively. The results are shown in Figure \ref{fig_app4}.

Under the experimental conditions of this study, we consistently observe that a smaller network model with fewer parameters consistently yields better results in terms of MSE, regardless of whether under IID or OOD scenarios for forward prediction or backward inference. This observation suggests that networks with larger parameter sizes are susceptible to overfitting during forward prediction, causing the learned relationships to be biased toward misleading correlations. Furthermore, using models based on correlations for counterfactual inference generally leads to poorer experimental results. These findings suggest that a larger network size does not necessarily translate into better performance. Larger parameter sizes may emphasize correlational relationships to achieve lower losses, potentially compromising the effectiveness of forward prediction and counterfactual inference. Notably, the CINN based on FNN-16 retains causal generalization even across different action distributions, whereas other variant models cannot effectively transfer knowledge across different datasets.

\subsection{Stable and Safe BG Control\label{section54}}

As shown in Figure \ref{fig6}, we are delving into continuous BG control experiments. We aim to regulate patients' BG levels within the target range of 70-180 mg/dl by adjusting their exogenous insulin injection doses. We integrate the pre-trained CINN into SAC because SAC has the most significant room for improvement among A2C, SAC, and PPO. In addition, we view the CINN as a frozen introspective block that guides policy updates. This introspective approach involves performing counterfactual inference before executing each action prescribed by the policy. This step aims to reduce futile trials and potential errors during training, ultimately leading to a more stable and safer BG control.

The cumulative reward curves highlight the inconsistency in reward increments for PPO and A2C as the training epochs progress. These two methods often result in BG levels that predominantly exceed the upper bound of the target range. This observation underscores the limitations of these conventional RL policies, mainly due to their limited exploration capabilities, which can lead to trapping in local optima. Although SAC significantly improves cumulative reward, we note the occurrence of low BG levels due to excessive exogenous insulin dosing, which poses significant risks to patients with T1D. Further results of the SAC+ICM curve show persistent fluctuations between BG decreases and increases, failing to achieve a stable BG phase. We contribute such persistent instability to the unsafe trials from curiosity-based rewards.

Because of SAC's most significant room for improvement, we integrate the pre-trained CINN into SAC, effectively infusing prior causal knowledge into the RL paradigm. We find that this RL+CINN paradigm can consistently maintain T1D patients' BG levels within the target range while ensuring both stability and safety. This integration allows the RL policy to capture the underlying evolutionary mechanism of patients' states concerning external interventions, thereby achieving effective BG control. Furthermore, introducing introspective rewards enhances RL's safe exploration capacity, accelerating the control policy's convergence and resulting in higher cumulative rewards with improved stability.

\section{Discussion}

This study presents, for the first time, an introspective RL algorithm with pre-trained CINN for stable and safe BG control. Key findings include: (1) the pre-trained CINN demonstrates significantly superior performance compared to baselines on forward prediction and counterfactual inference; (2) through ablation experiments on topological order, the topological order of the true causal graphs significantly improves the performance of the CINN and ensures the interpretability of the network structure construction; (3) ablation experiments on parameter scales confirm the lightweight nature of the CINN, which can efficiently serve as an introspection block to guide policy updates; (4) the use of RL+CINN exhibits optimal model performance, achieving stable and safe BG control. In this section, we will focus on the reasons why safe and stable BG control can be achieved after RL+CINN.

Just as the teacher network guides the student network in knowledge distillation \citep{shi2021covid,pham2021cnn}, our pre-trained CINN also guides policy updating within RL. What makes it different, however, is that the CINN also prompts the policy to engage in introspection, often compared to perception, reasoning, memory, and testimony as sources of knowledge \citep{perner2007introspection}. First, with the help of the pre-trained CINN, the RL policy can receive a prediction-driven reward based on predicting the future from the current state and action. If the predicted future state $\hat{s}^{t+1}$ deviates significantly from the actual $s^{t+1}$, indicating a potentially unsafe action by the policy, this reward encourages the policy to recognize and subsequently generate a safer action in the following process. Second, the RL policy can also receive an introspection reward based on counterfactual inference actions from the current and future states. This reward encourages the policy to take more rational actions. With the combined effect of these two additional rewards, our RL+CINN paradigm achieves more stable and safer performance in BG control.

\begin{figure}[t]
    \begin{minipage}[t]{0.5\linewidth}
        \centering 
        \small
        \includegraphics[width=\textwidth]{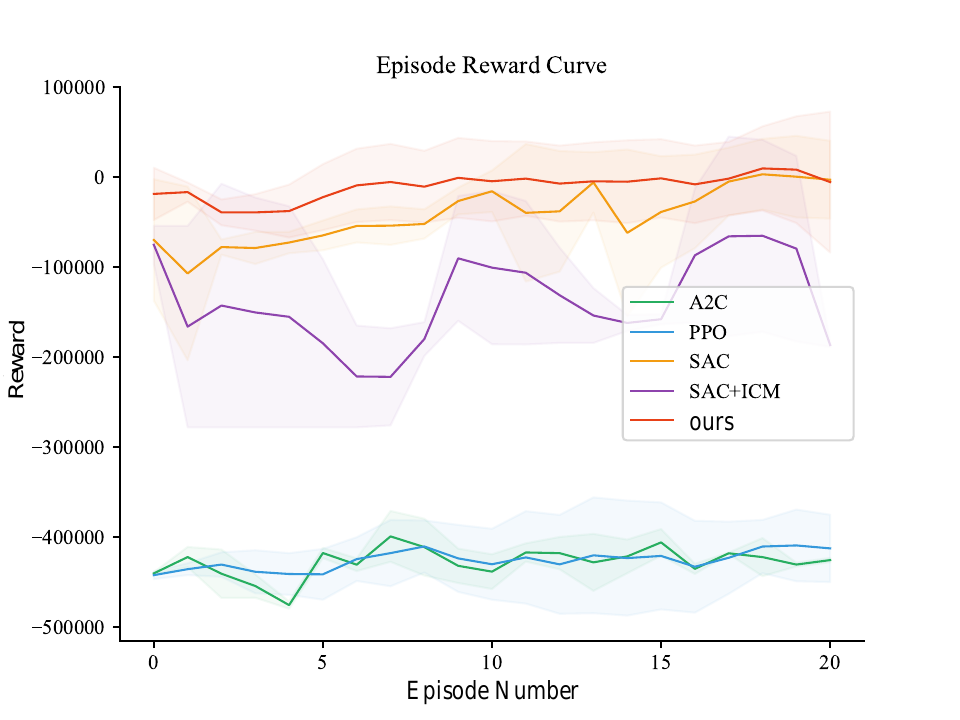}
        \centerline{(a) Cumulative Reward Curve}
    \end{minipage}%
    \begin{minipage}[t]{0.5\linewidth}
        \centering
        \small
        \includegraphics[width=\textwidth]{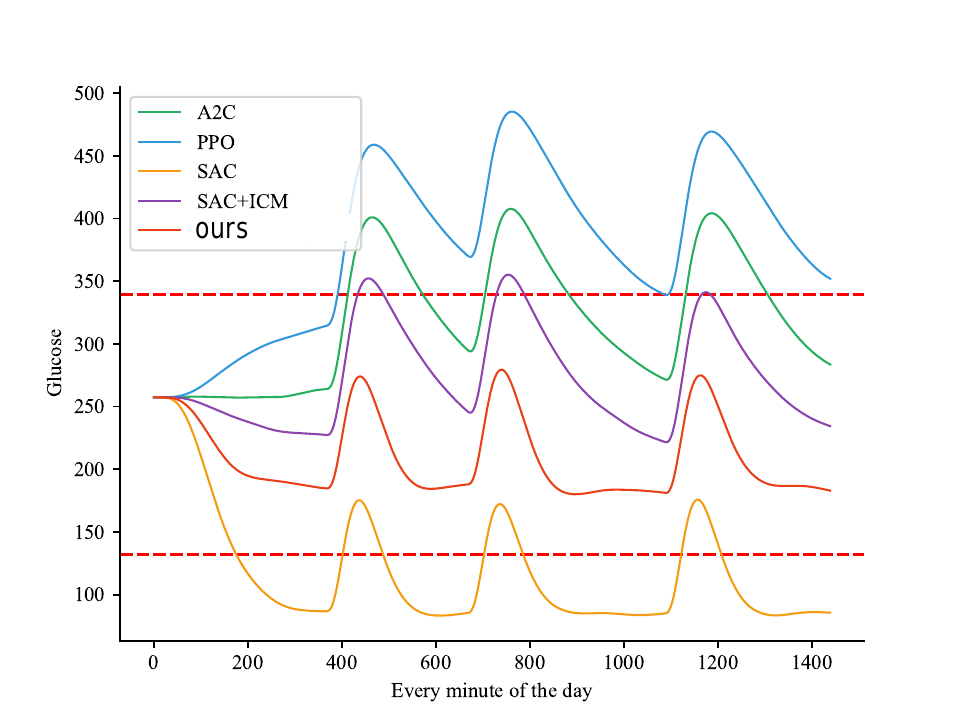}
        \centerline{(b) Blood Glucose Curve}
    \end{minipage}
    \caption{The effect of CINN and the comparison model in the BG control.}
    \label{fig6}
\end{figure}

\section{Conclusion}

This paper presents a BG control method using pre-trained CINN, a bidirectional neural network that integrates forward prediction and counterfactual inference. CINN employs affine coupling layers and orthogonal weight normalization for forward prediction of future states under external interventions and counterfactual introspection of expected actions. Experimental results show that pre-trained CINN exhibits high accuracy and generalizability characteristics, outperforming other baselines under both IID and OOD scenarios. By incorporating pre-trained CINN as a frozen introspective block to guide RL updates, the BG control experiments show exceptional stability and safety. This integration promotes counterfactual thinking before each policy action, thereby reducing uncertain trials. In addition, experiments show that introspective RL converges faster, resulting in higher and more stable cumulative rewards. In BG control tasks, the SAC+CINN achieves optimal performance, maintaining BG levels within the target range and minimizing the risk of hypoglycemia and hyperglycemia.

Despite the remarkable performance of introspective RL, it still faces certain limitations. One limitation is its dependence on the accuracy of the pre-trained CINN, as an inaccurate CINN could render introspective RL ineffective. In addition, introspective RL does not account for situations where users do not eat on time, which is a common out-of-distribution (OOD) generalization scenario. For future research, we intend to explore a more comprehensive BG control model based on the random meal times of patients according to DMMS.R, which aligns more closely with real-world scenarios.

\section{Acknowledgments}

This study was supported in part by a grant from the National Key Research and Development Program of China [2021ZD0110900] and the National Natural Science Foundation of China [62006063].

\bibliographystyle{unsrtnat}
\bibliography{references}

\end{document}